%
%


\documentclass[journal]{IEEEtran}

\usepackage{xcolor,soul,framed} 
\usepackage{booktabs}
\usepackage{multirow}
 \usepackage{cite}
 \usepackage{color}

\colorlet{shadecolor}{yellow}
\usepackage[pdftex]{graphicx}
\graphicspath{{../pdf/}{../jpeg/}}
\DeclareGraphicsExtensions{.pdf,.jpeg,.png}

\usepackage[cmex10]{amsmath}
\usepackage{amsfonts,amssymb}
\usepackage{array}
\usepackage{mdwmath}
\usepackage{mdwtab}
\usepackage{eqparbox}
\usepackage{url}
\usepackage{amsthm}
\hyphenation{op-tical net-works semi-conduc-tor}
\newtheorem{definition}{Definition}
\newtheorem{theorem}{Theorem}
\usepackage{algorithmic}
\usepackage{algorithm}
\usepackage{graphicx}
\usepackage{subfigure}


\begin{document}
\bstctlcite{IEEEexample:BSTcontrol}
    \title{Fast Greedy Subset Selection from Large Candidate Solution Sets in Evolutionary Multi-objective Optimization
}
  \author{Weiyu~Chen,~\IEEEmembership{}
      Hisao~Ishibuchi,~\IEEEmembership{Fellow,~IEEE,}
      and Ke~Shang~\IEEEmembership{}

  \thanks{This paper is an expanded paper from the IEEE Congress on Evolutionary Computation held on July 19-24, 2020 in Glasgow, UK. This work was supported by National Natural Science Foundation of China (Grant No. 61876075), Guangdong Provincial Key Laboratory (Grant No. 2020B121201001), the Program for Guangdong Introducing Innovative and Enterpreneurial Teams (Grant No. 2017ZT07X386), Shenzhen Science and Technology Program (Grant No. KQTD2016112514355531), the Program for University Key Laboratory of Guangdong Province (Grant No. 2017KSYS008).}
  \thanks{Corresponding author: Hisao Ishibuchi}
  \thanks{Weiyu Chen, Hisao Ishibuchi and Ke Shang are with the Department of Computer Science and Engineering,  Southern University of Science and Technology, Shenzhen, China (e-mail: 11711904@mail.sustech.edu.cn; hisao@sustech.edu.cn; kshang@foxmail.com).}
}

\markboth{IEEE TRANSACTIONS ON EVOLUTIONARY COMPUTATION, VOL.~XX, NO.~XX, AUGUST~2020
}{Chen \MakeLowercase{\textit{et al.}}: Fast Greedy Subset Selection}

\maketitle

\begin{abstract}
Subset selection is an interesting and important topic in the field of evolutionary multi-objective optimization (EMO). Especially, in an EMO algorithm with an unbounded external archive, subset selection is an essential post-processing procedure to select a pre-specified number of solutions as the final result. In this paper, we discuss the efficiency of greedy subset selection for the hypervolume, IGD and IGD+ indicators. Greedy algorithms usually efficiently handle subset selection. However, when a large number of solutions are given (e.g., subset selection from tens of thousands of solutions in an unbounded external archive), they often become time-consuming. Our idea is to use the submodular property, which is known for the hypervolume indicator, to improve their efficiency. First, we prove that the IGD and IGD+ indicators are also submodular. Next, based on the submodular property, we propose an efficient greedy inclusion algorithm for each indicator. Then, we demonstrate through computational experiments that the proposed algorithms are much faster than the standard greedy subset selection algorithms.
\end{abstract}

\begin{IEEEkeywords}
Evolutionary multi-objective optimization, evolutionary many-objective optimization, subset selection, performance indicators, submodularity.
\end{IEEEkeywords}

%


\section{Introduction}

\IEEEPARstart{E}{volutionary} multi-objective optimization (EMO) algorithms aim to optimize $m$ potentially conflicting objectives concurrently. To compare solutions, we usually use the Pareto-dominance relation \cite{1multicriteria}. In a multi-objective minimization problem with $m$ objectives $f_i(x), i = 1, 2, ..., m$, solution $a$ dominates solution $b$ (i.e., $a \prec b$) if and only if $\forall i \in \{1,2,…,m\}, f_i(a) \leq f_i (b)$ and $\exists j \in \{1,2,…,m\},f_j(a) < f_j(b)$. Its weaker version is defined by only the first condition: $a$ weakly dominates $b$ (i.e., $a \preceq b$) if and only if $\forall i\in \{1,2,…,m\}, f_i(a) \leq f_i(b)$. This relation includes the case where $a$ and $b$ are exactly the same in the $m$-dimensional objective space (i.e., $f_i(a) = f_i(b)$ for $i = 1, 2, ..., m$). 

A solution that is not dominated by any other feasible solutions of a multi-objective problem is called a Pareto optimal solution. A Pareto front of the multi-objective problem is the set that contains the corresponding objective values of all Pareto optimal solutions. If no solution in a solution set is dominated by any other solutions, the solution set is referred to as a non-dominated solution set. 

The Pareto dominance relation has been extended to the dominance relation between two sets \cite{2Zitzler}: A solution set $A$ dominates solution set $B$ if and only if $\forall b \in B, \exists a \in A$, such that $a \prec b$. A better relation \cite{2Zitzler} is defined using the weak Pareto dominance relation between two non-dominated solution sets $A$ and $B$ as follows: $A$ is better than $B$ if and only if $\forall b \in B, \exists a \in A, a \preceq b$ and $A \neq B$.

Since the number of Pareto optimal solutions can be very large for combinatorial optimization problems and infinity for continuous optimization problems, subset selection is an essential topic in the EMO field. It is involved in many phases of EMO algorithms. (i) In each generation, we need to select a pre-specified number of solutions from the current and offspring populations for the next generation. (ii) After the execution of EMO algorithms, the final population is usually presented to the decision-maker. However, if the decision-maker does not want to examine all solutions in the final population, we need to choose only a small number of representative solutions for the decision-makers. (iii) Since many good solutions are discarded during the execution of EMO algorithms \cite{3Hisao}, we can use an unbounded external archive (UEA) to store all non-dominated solutions examined during the execution of EMO algorithms. In this case, a pre-specified number of solutions are selected from a large number of non-dominated solutions in the archive.

Formally, a subset selection problem can be defined as follows:
\begin{definition}[Subset Selection Problem]
Given an $n$-point set $V \subset  \mathbb{R}^m$, a performance indicator $g: 2^{|V|} \rightarrow \mathbb{R}$ and a positive integer $k$ ($k < n$), maximize $g(S)$ subject to $S \subset V$  and $|S| \leq k$.
\end{definition}

Many subset selection algorithms have been proposed. They can be classified into the following three categories: (i) exhaustive search algorithms, (ii) evolutionary algorithms, and (iii) greedy algorithms. Exhaustive search algorithms search for the optimal solution subset. The performance indicator value of the optimal solution subset is always called OPT. Since the subset selection problem is NP-hard \cite{4Natarajan,5Davis}, it is impractical to find the optimal solution subset unless the candidate solution set is small. In practice, evolutionary algorithms and greedy algorithms are usually used. 

Greedy algorithms can be further divided into greedy inclusion algorithms and greedy removal algorithms. The greedy inclusion algorithms select solutions from \(V\) one by one. In each iteration, the solution that leads to the largest improvement of the performance indicator \(g\) is selected until the required number of solutions are selected. If the performance indicator is submodular, the obtained solution set can achieve a $(1-1/e)$-approximation ($e$ is the natural constant) to the optimal subset \cite{6Nemhauser}. This means that the ratio of the performance indicator value of the obtained solution set to the performance indicator value of the optimal solution set is not less than $(1-1/e)$. In contrast to greedy inclusion algorithms, greedy removal algorithms discard one solution with the least improvement of the performance indicator \(g\) in each iteration. Although greedy removal algorithms are widely used and have shown good performance \cite{7Bradstreet}, currently there is no theoretical guarantee for their approximation accuracy.

When the required set size \(k\) is close to the size of $V$ (i.e., when the number of solutions to be removed is small), greedy removal algorithms are faster than greedy inclusion algorithms. However, when $k$ is relatively small in comparison with the size of $V$, greedy removal algorithms are not efficient since they need to remove a large number of solutions.

Evolutionary algorithms have also been applied to subset selection problems \cite{8Qian,9Hisao}. An arbitrary subset of $V$ can be encoded as a binary string $b$ of length $|V|$. In the string, a one “1” indicates the inclusion of the corresponding solution in the subset and a zero “0” indicates the exclusion of the solution. A population of subsets (i.e., binary strings) is improved by an evolutionary algorithm. If the performance indicator \(g\) is submodular, some algorithms such as POSS \cite{8Qian} have also been proved to be $(1-1/e)$-approximation algorithms.

Subset selection can use different selection criteria (i.e., different solution set evaluation criteria). From this point of view, subset selection algorithms can be categorized as distance-based subset selection \cite{10Tanabe,11Singh,12Chen}, clustering-based subset selection \cite{11Singh}, hypervolume subset selection \cite{13hype,14Bringmann,15Kuhn,16Guerreiro}, IGD subset selection \cite{17Ishibuchi}, IGD+ subset selection \cite{17Ishibuchi}, and $\epsilon$-indicator subset selection \cite{14Bringmann,18bringmann}. Some of these indicators require a large computation load. For example, the calculation of the hypervolume indicator is \#P-hard with respect to the number of objectives \cite{19bringmann2010approximating}. 

Selecting the best solution from all the evaluated solutions as the final solution is a standard choice in single-objective optimization. The best solution is usually stored and updated during the execution of a single-objective algorithm. However, in multi-objective optimization, it is not easy to store the best solution set since the quality of each solution is relatively measured based on its relation with other solutions. It is possible that the same solution is evaluated as the best in one generation and the worst in another generation. Experimental results in \cite{Miqing} show that some good solutions can be discarded during the evolution process (thus the final solution set is not the best solution set). Therefore, recently, subset selection from all the evaluated solutions has been used in some studies to find better solution sets than the final population \cite{3Hisao, 10Tanabe, 17Ishibuchi,12Chen,landscape,pang2020algorithm,reverse,TANABE2018}. The selected solution sets usually have better indicator values than the final population.

Although subset selection from all the evaluated solutions is effective, it is time-consuming. When an EMO algorithm is applied to a many-objective problem, a huge number of non-dominated solutions are usually included in the evaluated solutions. That is, the size of the candidate solution set for subset selection is huge. For example, the size of  the UEA obtained by NSGA-II on the nine-objective car  cab  design  problem \cite{NSGAIII}  after 200 generations is around 20,000. In this case, subset selection needs long computation time even when greedy algorithms are used. The focus of this paper is to decrease the computation time of greedy subset selection algorithms, which improves the usefulness and applicability of EMO algorithms in real-world scenarios by efficiently selecting better solution sets than the final population.

In this paper, we propose new greedy inclusion algorithms for the hypervolume, IGD and IGD+ subset selection. The submodularity \cite{6Nemhauser} of the hypervolume, IGD and IGD+ indicators is the key to make the proposed algorithms efficient and applicable to large candidate solution sets with many objectives. Based on the submodularity of these indicators, we can reduce the unnecessary contribution calculations of solutions without changing the results (i.e., without changing the obtained solution sets) of the corresponding greedy subset selection algorithms. Experimental results show that the proposed idea drastically improves the efficiency of greedy subset selection from large candidate solution sets of many-objective problems.

The followings are the main contributions of this paper.
\begin{itemize}
\item We explain the submodular property of set functions, and prove that the IGD and IGD+ indicators have the submodular property. Based on our proof, the theoretical approximation ratio of the greedy algorithm based on each indicator can be obtained. 
\item By exploiting the submodular property of the three indicators (i.e., hypervolume, IGD, and IGD+), we propose an efficient greedy inclusion algorithm for subset selection based on each indicator.
\item We demonstrate that the computation time for subset selection can be significantly decreased by the proposed algorithms through computational experiments on frequently-used test problems and some real-world problems.
\end{itemize}

The remainder of this paper is arranged as follows. In Section II, we explain hypervolume-based subset selection. In Section III, we discuss the use of the IGD and IGD+ indicators for subset selection. In Section IV, we propose an efficient greedy inclusion algorithm for each indicator. In Section V, the proposed algorithms are compared with the standard greedy algorithms. Finally, we conclude this paper in Section V. 

This paper is an extended version of our conference paper \cite{45HSS}. In our conference paper, we proposed an efficient greedy inclusion algorithm using the submodular property of the hypervolume indicator. In this paper, we propose efficient algorithms for the IGD and IGD+ indicators after proving their submodular property. More experimental results are also reported in this paper in order to demonstrate the efficiency of the proposed submodular property-based algorithms for the three performance indicators.

\section{Hypervolume Subset Selection}

\subsection{Hypervolume indicator and hypervolume contribution}
The hypervolume indicator \cite{20knowles2003bounded, 21Zitzler, KeHypervolume} is a widely used metric to evaluate the diversity and convergence of a solution set. It is defined as the size of the objective space which is covered by a set of non-dominated solutions and bounded by a reference set \(R\). Formally, the hypervolume of a solution set \(S\) is defined as follows:
\begin{equation}
HV(S) := \int_{z \in \mathbb{R}^m} A_s(z) dz,
\end{equation}
where \(m\) is the number of dimensions and $A_s: \mathbb{R}^m \rightarrow \{0, 1\}$ is the attainment function of $S$ with respect to the reference set $R$ and can be written as
\begin{equation}
A_s(z) =\left\{
\begin{aligned}
1 &          &if\ \exists \  s \in S, r \in R : f(s) \preceq z \preceq r,\\
0 &          & otherwise.
\end{aligned}
\right.
\end{equation}

Calculating the hypervolume of a solution set is a \#P-hard problem \cite{19bringmann2010approximating}. A number of algorithms have been proposed to quickly calculate the exact hypervolume such as Hypervolume by Slicing Objectives (HSO) \cite{22durillo2010jmetal,23durillo2011jmetal}, Hypervolume by Overmars and Yap (HOY) \cite{24,25overmars1991new,26beume2009s}, and Walking Fish Group (WFG) \cite{27while2011fast}. Among those algorithms, WFG has been generally accepted as the fastest one. 
The hypervolume contribution is defined based on the hypervolume indicator. The hypervolume contribution of a solution \(p\) to a set \(S\) is
\begin{equation}
HVC(p, S) = HV(S \cup \{p\}) - HV(S).
\end{equation}

Fig. 1 illustrates the hypervolume of a solution set and the hypervolume contribution of a solution to the solution set in two dimensions. The grey region is the hypervolume of the solution set \( S = \{a,b,c,d,e\} \) and the yellow region is the hypervolume contribution of a solution \(p\) to \(S\).

\begin{figure}[htbp]
\centering
\includegraphics[width= 0.35\textwidth, trim=10 20 0 0,clip]{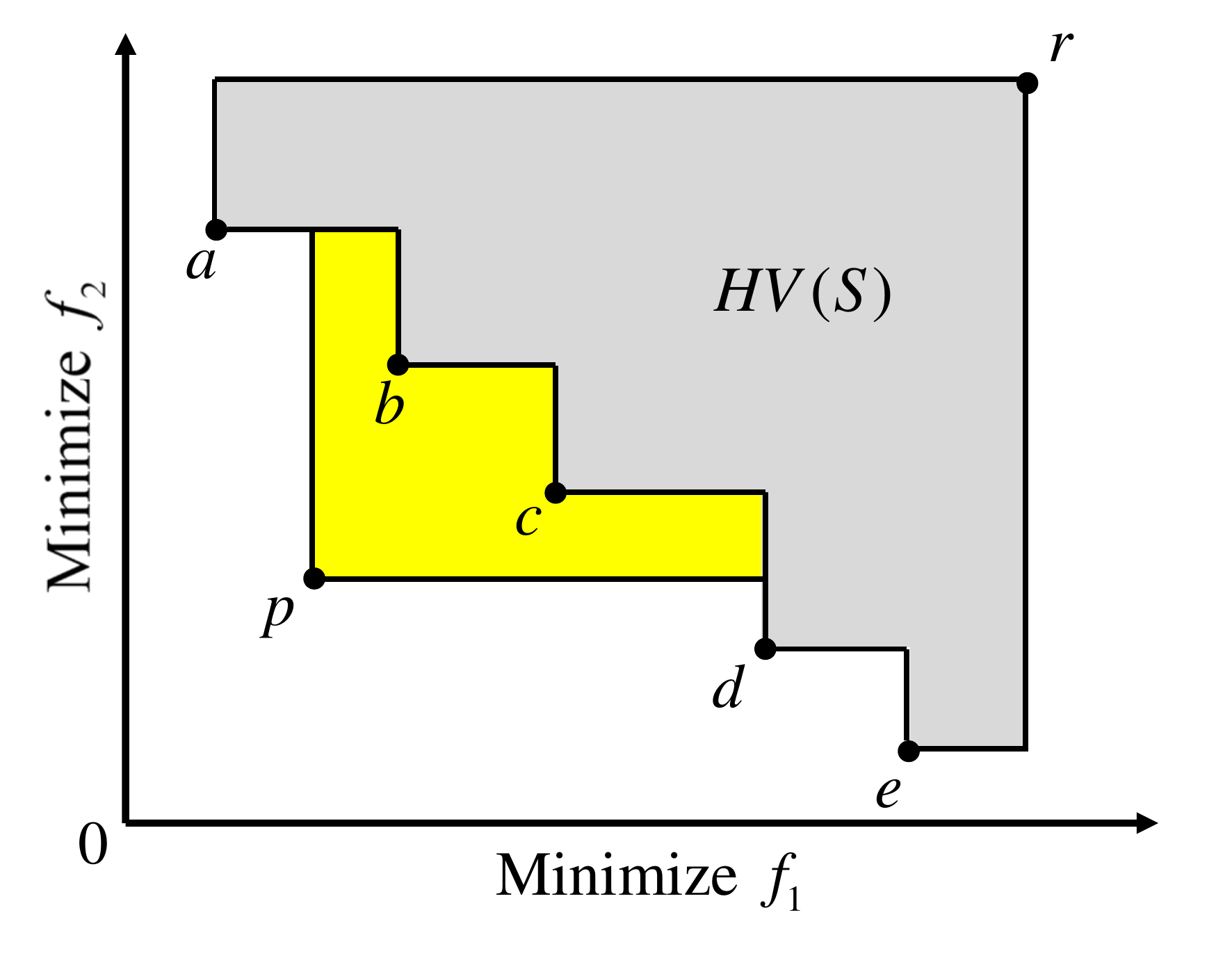}
\caption{The hypervolume of the solution set $S = \{a,b,c,d,e\}$ and the hypervolume contribution of \(p\) to the solution set \(S\) for a two-objective minimization problem.}
\end{figure}

Note that calculating the hypervolume contribution based on its definition in (3) requires hypervolume calculation twice, which is not very efficient. Bringmann and Friedrich \cite{19bringmann2010approximating} and Bradstreet et al. \cite{29Bradstreet2008fast} proposed a new calculation method to reduce the amount of calculation. The hypervolume contribution is calculated as
\begin{equation}
HVC(p, S) = HV(\{p\}) - HV(S'),
\end{equation}
where
\begin{equation}
S' = \{limit(s,p)|s \in S\},
\end{equation}
\begin{equation}
\begin{split}
&limit((s_1, ..., s_m), (p_1, ..., p_m)) \\
&= (worse(s_1, p_1),...,worse(s_m, p_m)).
\end{split}
\end{equation}

In this formulation $worse\left(s_i, p_i\right)$ takes the larger value for minimization problems. Compared to the straightforward calculation method in (3), this method is much more efficient. The hypervolume of one solution (i.e., \( HV\left(\{p\}\right)\)) can be easily calculated. We can also apply the previous mentioned HSO \cite{22durillo2010jmetal,23durillo2011jmetal}, HOY \cite{24,25overmars1991new,26beume2009s} and WFG \cite{27while2011fast} to calculate the hypervolume of a reduced solution set \(S’\) (i.e., \(HV\left(S'\right)\)).

Let us take Fig. 2 as an example. Suppose we want to calculate the hypervolume contribution of solution $p$ to a solution set $S = \{a,b,c,d,e\}$. First, for each solution in $S$, we replace each of its objective values with the corresponding value from solution $p$   if the value of $p$ is larger (i.e., we calculate $limit (a, p), ..., limit(e, p)$). This leads to $S’ = \{a’,b’,c’,d’,e’\}$. After the replacement, \(e’\) is dominated by \(d’\). Thus \(e’\) can be removed from \( S’\) since $e’$ has no contribution to the hypervolume of \(S’\). Similarly, \(a’\) and  \(b’\) can also be removed from \( S’\).  Then, we calculate the hypervolume of $S’$ (i.e., the area of the gray region in Fig. 2) and subtract it from the hypervolume of solution $p$. The remaining yellow part is the hypervolume contribution of solution $p$.
\begin{figure}[htbp]
\centering
\includegraphics[width=0.35 \textwidth, trim=10 20 0 0,clip]{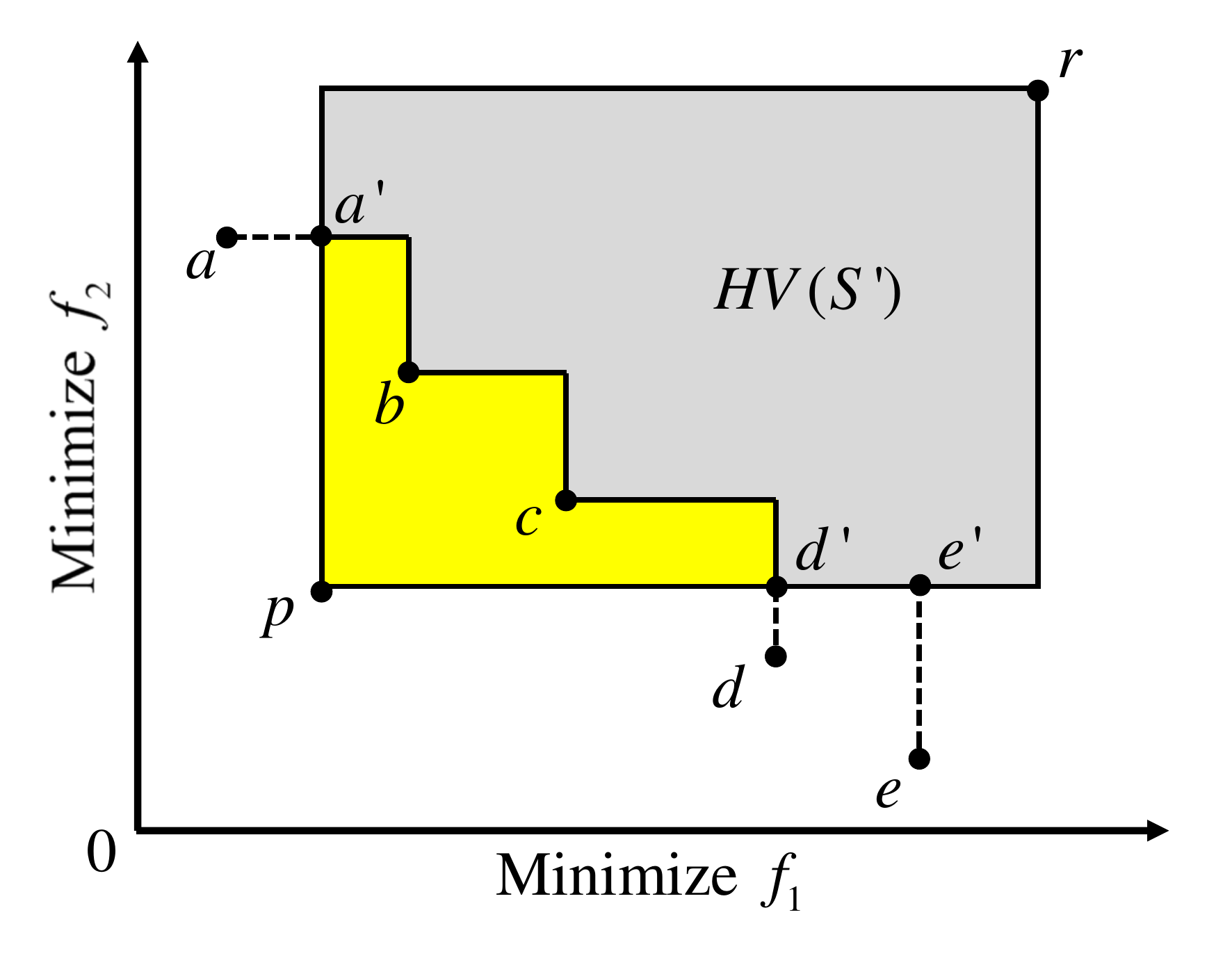}
\caption{Illustration of the efficient hypervolume contribution computation method.}
\end{figure}
\subsection{Hypervolume Subset Selection Problem }
The hypervolume subset selection problem (HSSP) \cite{13hype} uses the hypervolume indicator as the solution selection criteria (i.e., the hypervolume indicator is used as the performance indicator \(g\) in \textbf{Definition 1}). The HSSP aims to select a pre-specified number of solutions from a given candidate solution set to maximize the hypervolume of the selected solutions. 

For two-objective problems, HSSP can be solved with time complexity of \(O(nk + nlogn) \)\cite{18bringmann} and \(O((n-k)k + nlogn)\)\cite{15Kuhn}. For multi-objective problems with three or more objectives, HSSP is an NP-hard problem \cite{28Rote2016selecting}, it is impractical to try to find the exact optimal solution set when the size of the candidate set is large and/or the dimensionality of the objective space is high. In practice, some greedy heuristic algorithms and genetic algorithms are employed to obtain an approximated optimal solution set.

\subsection{Greedy Hypervolume Subset Selection}
Hypervolume greedy inclusion selects solutions from $V$ one by one. In each iteration, the solution that has the largest hypervolume contribution to the selected solution set is selected until the required number of solutions are selected. The pseudocode of greedy inclusion is shown in Algorithm 1. Since hypervolume is a submodular indicator \cite{37ulrich2012bounding}, hypervolume greedy inclusion algorithms provide a $(1-1/e)$-approximation to HSSP \cite{6Nemhauser}. 

Hypervolume greedy removal algorithms discard one solution with the least hypervolume contribution to the current solution set in each iteration. Unlike greedy inclusion, greedy removal has no approximation guarantee. It can obtain an arbitrary bad solution subset \cite{43bringmann2010efficient}. However, in practice, it usually leads to good approximations.

Although the greedy algorithm has polynomial complexity, it still takes a long computation time when the candidate set is large or the dimension is high. A lot of algorithms are proposed to accelerate the naïve greedy algorithm. For example, to accelerate the hypervolume-based greedy removal algorithm, Incremental Hypervolume by Slicing Objectives (IHSO*) \cite{29Bradstreet2008fast} and Incremental WFG (IWFG) \cite{30cox2016improving} were proposed to identify the solution with the least hypervolume contribution quickly. Some experimental results show that these methods can significantly accelerate greedy removal algorithms.

\begin{algorithm}
	\caption{Greedy Inclusion Hypervolume Subset Selection}
	\begin{algorithmic}[1]
    \REQUIRE  $V$ (A set of non-dominated solutions), $k$ (Solution subset size)
    \ENSURE $S$  (The selected subset from $V$)
     \IF{$|V| < k $}
           \STATE $S = V$
     \ELSE
  			\STATE  $S = \emptyset$
    		\WHILE{$|S| < k $}
    			\FOR {\textbf{each} $s_i$ in $V \setminus S$}
         				\STATE Calculate the hypervolume contribution of $s_i$ to $S$
                 \ENDFOR
      			\STATE $p$ = Solution in  $V\setminus S$  with the largest hypervolume contribution
      			\STATE $S = S \cup \{p\}$ 
             \ENDWHILE  
      \ENDIF
	\end{algorithmic}
\end{algorithm}

Hypervolume-based greedy inclusion/removal algorithms can be accelerated by updating hypervolume contributions instead of recalculating them in each iteration (i.e., by utilizing the calculation results in the previous iteration instead of calculating hypervolume contributions in each iteration independently). Guerreiro et al. \cite{16Guerreiro} proposed an algorithm to update the hypervolume contributions efficiently in three and four dimensions. Using their algorithm, the time complexity of hypervolume-based greedy removal in three and four dimensions can be reduced to \(O (n(n - k) + n log n) \) and \(O(n^2 (n - k)) \) respectively.

In a hypervolume-based EMO algorithm called FV-MOEA proposed by Jiang et al. \cite{31jiang2014simple}, an efficient hypervolume contribution update method applicable to any dimension was proposed. The main idea of their method is that the hypervolume contribution of a solution is only associated with a small number of its neighboring solutions rather than all solutions in the solution set. Let us suppose that one solution \( s_j \) have just been removed from the solution set \( S\), the main process of the hypervolume contribution update method in \cite{31jiang2014simple} is shown in Algorithm 2. 

\begin{algorithm}
	\caption{Hypervolume Contribution Update}
	\begin{algorithmic}[1]
    \REQUIRE  $HVC$ (The hypervolume contribution of each solution in $S$), $s_j$ (The newly removed solution)
    \ENSURE $HVC$ (The updated hypervolume contribution of each solution in \(S\))
    \FOR {\textbf{each} $s_k \in S$}
         \STATE $w = worse(s_k, s_j)$
         \STATE $W = \{limit(t, w)|t \in S\}$
         \STATE $HVC(s_k) = HVC(s_k) + HV(\{w\}) - HV(W)$
    \ENDFOR 
	\end{algorithmic}
\end{algorithm}
The \textit{worse} and \textit{limit} operations in Algorithm 2 are the same as those in Section II.A. Let us explain the basic idea of Algorithm 2 using Fig. 3. When we have a solution set \(S = \{a, b, c, d, e\}\) in Fig. 3, the hypervolume contribution of solution \( c\) is the blue area. When solution \(b\) is removed, the hypervolume contribution of \(c\) is updated as follows. The worse solution \(w\) in line 2 of Algorithm 2 has the maximum objective values of solutions \(b\) and \(c\). In line 3, firstly the limit operator changes solutions \(a\), \(d\) and \(e\) to \(a’\), \(d’\) and \(e’\). Next, the dominated solution \(e’\) is removed.  Then the solution set \(W = \{a’,d’\}\) is obtained. In line 4, the hypervolume contribution of \(c\) is updated by adding the term \(HV(\{w\})-HV(W)\) to its original value (i.e., the blue region in Fig. 3). The added term is the joint hypervolume contribution of solutions \(b\) and \( c\) (i.e., the yellow region in Fig. 3).  In this way, the hypervolume contribution of each solution is updated.

Since the \textit{limit} process reduces the number of non-dominated solutions, this updated method greatly improves the speed of hypervolume-based greedy removal algorithms. Algorithm 2 in \cite{31jiang2014simple} is the fastest known algorithm to update the hypervolume contribution in any dimension.

\begin{figure}[htbp]
\centering
\includegraphics[width=0.35 \textwidth, trim=10 13 0 0,clip]{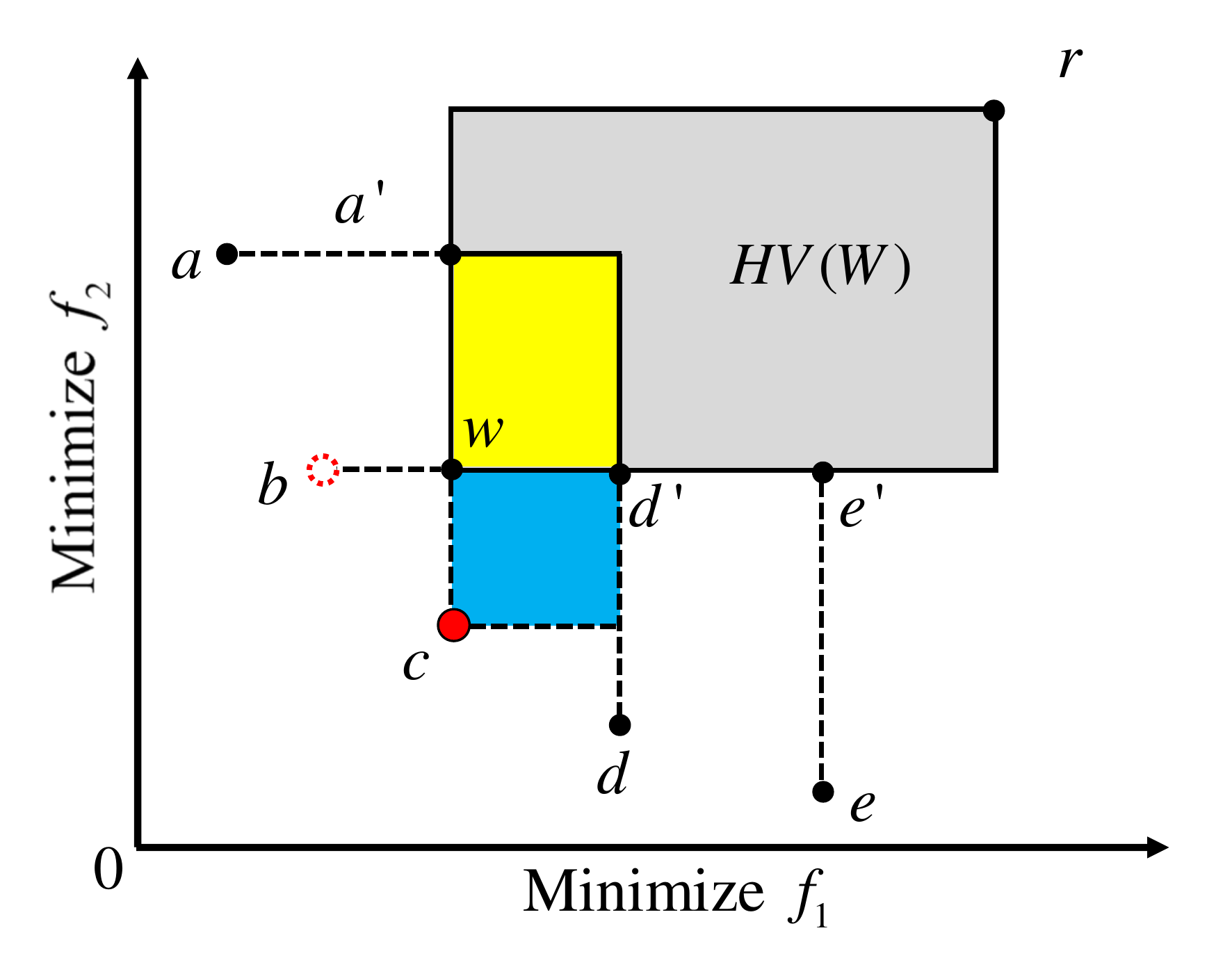}
\caption{Illustration of the hypervolume contribution update method in FV-MOEA. In this figure, it is assumed that point \(b\) has just been removed and the hypervolume contribution of point \(c\) is to be updated. }
\end{figure}
\section{IGD and IGD+ Subset Selection}
The Inverted Generational Distance (IGD) indicator \cite{32Coello2004A} is another widely-used indicator to evaluate solution sets. It calculates the average Euclidean distance from each reference point to the nearest solution. Formally, the IGD of a solution set \(S\) with respect to a reference set \(R\) can be calculated using the following formula:
\begin{equation}
IGD(S,R) = \frac{1}{|R|}\sum_{r \in R} \min_{s \in S} \sqrt{\sum_{i = 1}^d (s_i-r_i)^2} .
\end{equation}

The IGD indicator can evaluate the convergence and diversity of a solution set in polynomial time complexity. However, one of the disadvantages of the IGD indicator is that it is not Pareto compliant \cite{33Ishibuchi}. This means that a solution set \(A\) which dominates another solution set \(B\) may have a worse IGD value than \(B\). 
The Inverted Generational Distance plus (IGD+) indicator \cite{33Ishibuchi} overcomes this disadvantage. Instead of using Euclidean distance to calculate the difference between a solution and a reference point, the IGD+ indicator uses a different distance called IGD+ distance. It is based on the dominance relation. The formula to calculate the IGD+ distance between a solution \(s\) and a reference point \(r\) is as follows (for minimization problems):
\begin{equation}
D^+ (s,r)=\sqrt{\sum_{i = 1}^m(max\{ 0, s_i - r_i\})^2}.
\end{equation}

Based on the definition of the IGD+ distance, the formula to calculate the IGD+ value of a solution set $S$ with respect to a reference set \(R\) is:
\begin{equation}
IGD^+(S,R) = \frac{1}{|R|}\sum_{r \in R} \min_{s \in S} \sqrt{\sum_{i = 1}^m(max\{ 0, s_i - r_i\})^2} .
\end{equation}

Fig. 4 illustrates the calculation of IGD and IGD+ of a solution set. In this example, the solution set \(S\) has three solutions \(a\), \(b\) and \(c\) (i.e., red points in the figure). The reference set \(R\) has four points $\alpha, \beta, \gamma, \delta $ (i.e., blue points in the figure). To calculate the IGD value of \(S\), firstly, we need to find the minimum Euclidean distance from each reference point in \(R\) to the solutions in the solution set \(S\). They are shown as black lines in Fig. 4). Then, the IGD of \(S\) is the mean value of these distances. 
To calculate IGD+ of \(S\) in Fig. 4, we should calculate the IGD+ distance between each reference point and each solution. Since solution \(b\) is dominated by the reference point \(\beta\), \(b_i-\beta_i\) is large or equal to zero for all objectives. Therefore, the IGD+ distance from solution \(\beta\) to \(b\) is the same as the Euclidean distance. For solution \(a\) and reference point \(\alpha\), since \(a_2-\alpha_2\) is smaller than zero, we only calculate the positive part (i.e., \(a_1-\alpha_1\)). Thus, the IGD+ distance between $a$ and $\alpha$ is the green dash line between $a$ and $\alpha$. In a similar way, we can obtain the IGD+ distance from each reference point to each solution and find the minimum IGD+ distance from each reference point to the solution set \(S\) (i.e., green dash lines in Fig. 4). The IGD+ of $S$ is the mean value of these distances.
\begin{figure}[htbp]
\centering
\includegraphics[width=0.35 \textwidth, trim=0 13 0 0,clip]{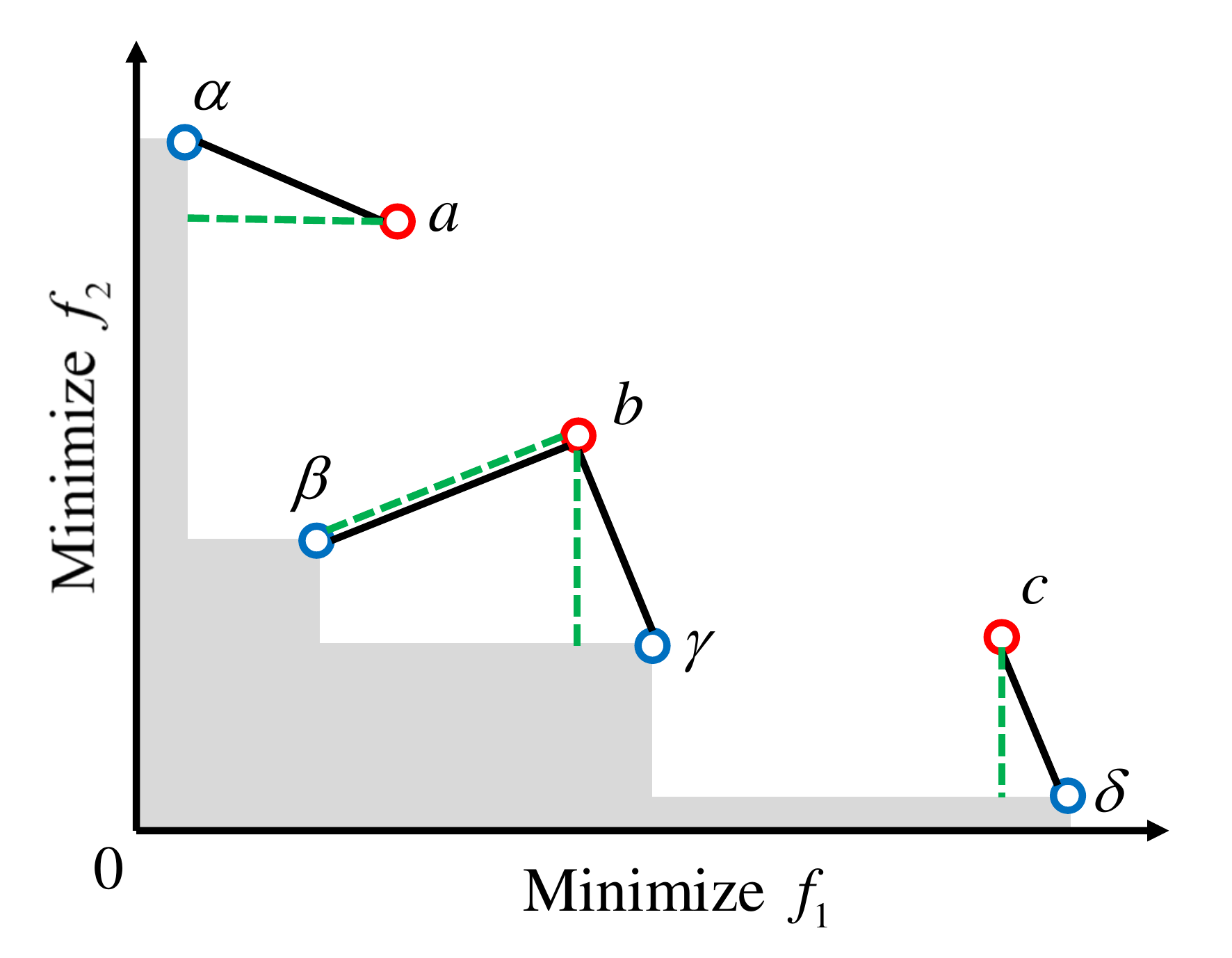}
\caption{Illustration of the calculation of IGD and IGD+.}
\end{figure}

\subsection{IGD/IGD+ Subset Selection Problem}
In contrast to hypervolume, the smaller IGD or IGD+ value indicates a better solution set. Therefore, the IGD subset selection problem takes the minus IGD of a solution set as the performance indicator \(g\) in \textbf{Definition 1} while the IGD+ subset selection problem takes the minus IGD+ of a solution set. Similar to HSSP, it is impractical to obtain the optimal subset unless the candidate solution set size is small. 

Although IGD is a popular indicator to evaluate the performance of an EMO algorithm, the IGD subset selection problem is seldom studied. The reason is that calculating the IGD needs a reference point set \(R\). When evaluating an algorithm on an artificial test problem, we can use the points on the true PF of the test problem to evaluate the solution set (since the true PF is known). However, in general, the true PF of the problem is unknown, which brings difficulties to the calculation of IGD \cite{44IGDShape}. 

One approach is to generate reference points from the estimated PF as in IGD-based EMO algorithms \cite{34Lopez,35IGD}. Firstly, the extreme points in the current solution set are found. Next, based on these extreme points, we specify a hyperplane in the objective space. Then, we uniformly sample reference points from the hyperplane for IGD/IGD+ calculation.
  
Another approach is to use the whole candidate solution set as the reference point set. In EMO algorithms with a bounded or unbounded external archive, some or all non-dominated solutions among the evaluated solutions during their execution are stored externally. In this paper, we assume such an EMO algorithm, and we use all the stored non-dominated solutions in the external archive as the reference point set for IGD/IGD+ calculation. As shown in Table III in Section V for some real-world problems, tens of thousands of non-dominated solutions are found by a single run of an EMO algorithm. However, it is often unrealistic to show such a large number of solutions to the decision maker. Thus, subset selection is used to choose a pre-specified number of solutions. Subset selection is also needed in performance comparison of EMO algorithms to compare them using solution sets of the same size. In the proposed IGD/IGD+ subset selection algorithms, all the obtained non-dominated solutions (i.e., all candidate solutions) are used as the reference point set.

The following are two advantages of IGD and IGD+ subset selection over hypervolume subset selection: (i) When the number of dimensions is large, calculating hypervolume is very time-consuming since hypervolume calculation is \#P-hard in the number of objectives \cite{19bringmann2010approximating}. However, the increase in the computation load for IGD and IGD+ is linear with respect to the number of objectives. (ii) The behavior of the hypervolume indicator is difficult to explain. Choosing different reference points can result in totally different optimal distributions \cite{36HowToReference}. The optimal distribution of solutions on PFs with different shapes can have a significant difference. Besides, the hypervolume optimal $\mu$ distributions in three or higher dimensions are still unknown. In contrast to hypervolume, IGD and IGD+ subset selection can be formulated as a problem to minimize the expected loss function \cite{17Ishibuchi}. From this point of view, IGD subset selection and IGD+ subset selection have a clear meaning for the decision-maker.

\subsection{IGD/IGD+ Subset Selection Algorithms}
Similar to HSSP, we can use greedy inclusion algorithms, greedy removal algorithms and evolutionary algorithms to find an approximate solution to the IGD and IGD+ subset selection problems. In greedy algorithms, we always need to calculate the IGD improvement of a solution \(a\) to the current solution set \(S\) with respect to the reference set \(R\). According to the definition of IGD, the IGD improvement of solution \(a\) to \(S\) can be calculated by $IGD(S,R) – IGD(S\cup\{a\},R)$. However, this is not efficient since IGD improvement calculation needs to calculate IGD twice. The time complexity of IGD improvement calculation is \(O(m|S||R|)\) for an \(m\)-objective problem with a candidate solution set \( S\) and a reference point set \(R\).

In this paper, we use a more efficient calculation method. When calculating IGD of a solution set, we also use an array \(D\) to store the distance from each reference point to the nearest solution in the solution set \(S\). When we want to calculate the IGD improvement of a new solution \(a\) to \(S\), we only need to calculate the distance from each reference point to \(a\) and store them in a new distance array \(D’\). Then, we compare each item in \(D\) and \(D’\). If the item in \(D’\) is smaller than \(D\), we replace the item in \(D\) with the corresponding item in \(D’\). The new IGD of the solution set \(S\cup\{a\}\) is the average value of each item in the new array \( D\). Finally, we subtract the new IGD from the original IGD to obtain the IGD improvement. The proposed method can reduce the complexity of IGD improvement calculation to \(O(m|R|)\), which can significantly decrease the computation time. 

The details of the IGD greedy inclusion algorithm with the proposed efficient IGD improvement calculation are shown in Algorithm 3. In this algorithm, $mean(\cdot)$ calculates the mean value of each item in an array and $min(\cdot,\cdot)$ takes the smaller one between two items that have the same index in the two arrays. 
\begin{algorithm}
	\caption{Greedy Inclusion IGD Subset Selection}
	\begin{algorithmic}[1]
    \REQUIRE  $V$ (A set of non-dominated solutions), $k$ (Solution subset size)
    \ENSURE $S$  (The selected subset from $V$)
     \IF{$|V| < k $}
           \STATE $S = V$
     \ELSE
  			\STATE  $S = \emptyset$
            \STATE $s_{min}$ = Solution in $V \backslash S$ with the smallest $IGD(s_i, V)$
            \STATE $S = S \cup \{s_{min}\}$
            \STATE $D$ = Euclidean distance from each reference point to $s_{min}$
    		\WHILE{$|S| < k $}
    			\FOR {\textbf{each} $s_i$ in $V \setminus S$}
                        \STATE $D'$ =  Euclidean distance from each reference point to $s_i$
         				\STATE $c_i = mean(D) - mean(min(D, D'))$
                 \ENDFOR
      			\STATE $p$ = Solution in  $V\setminus S$  with the largest $c_i$
                \STATE $D'$ =  Euclidean distance from each reference point to $p$
                \STATE $D = min(D, D')$
      			\STATE $S = S \cup \{p\}$ 
             \ENDWHILE  
      \ENDIF
	\end{algorithmic}
\end{algorithm}
Algorithm 3 is for IGD subset selection. It can be easily changed to the algorithm for IGD+ subset selection by replacing the Euclidean distance with the IGD+ distance. Besides, we can also use the efficient IGD improvement calculation method in IGD and IGD+ greedy removal algorithms.

\section{The Proposed Lazy Greedy Algorithms}
\subsection{Submodularity of Indicators}
The core idea of the proposed algorithms is to exploit the submodular property of the hypervolume, IGD and IGD+ indicators. Submodular function is a kind of set function that satisfies diminishing returns property. Formally, the definition of a submodular function is as follows \cite{6Nemhauser}.
\begin{definition}[Submodular Function]
A real-valued function z(V) defined on the set of all subsets of V that satisfies\\ 
$z(S_1\cup\{p\})-z(S_1) \leq z(S_2\cup \{p\})-z(S_2)$, $S_2 \subset S_1 \subset V, \\p \in V-S_1$ is called a submodular set function.
\end{definition}

Note that the submodular property is different from the non-decreasing property of set functions. To further illustrate the difference between them, we show three types of set functions in Fig. 5. The hypervolume indicator is similar to \(z_2\), which is non-decreasing (i.e., \(HV(Y) \geq HV(X), \text{if }  X \subseteq Y\)) and submodular (i.e., \(HVC(p, X) \geq HVC(p, Y), \text{if } X \subseteq Y\)). \(z_1\) is submodular but it is not non-decreasing. \(z_3\) is non-decreasing but it is non-submodular. 

\begin{figure}[htbp]
\centering
\includegraphics[width=0.3 \textwidth, trim=10 20 0 0,clip]{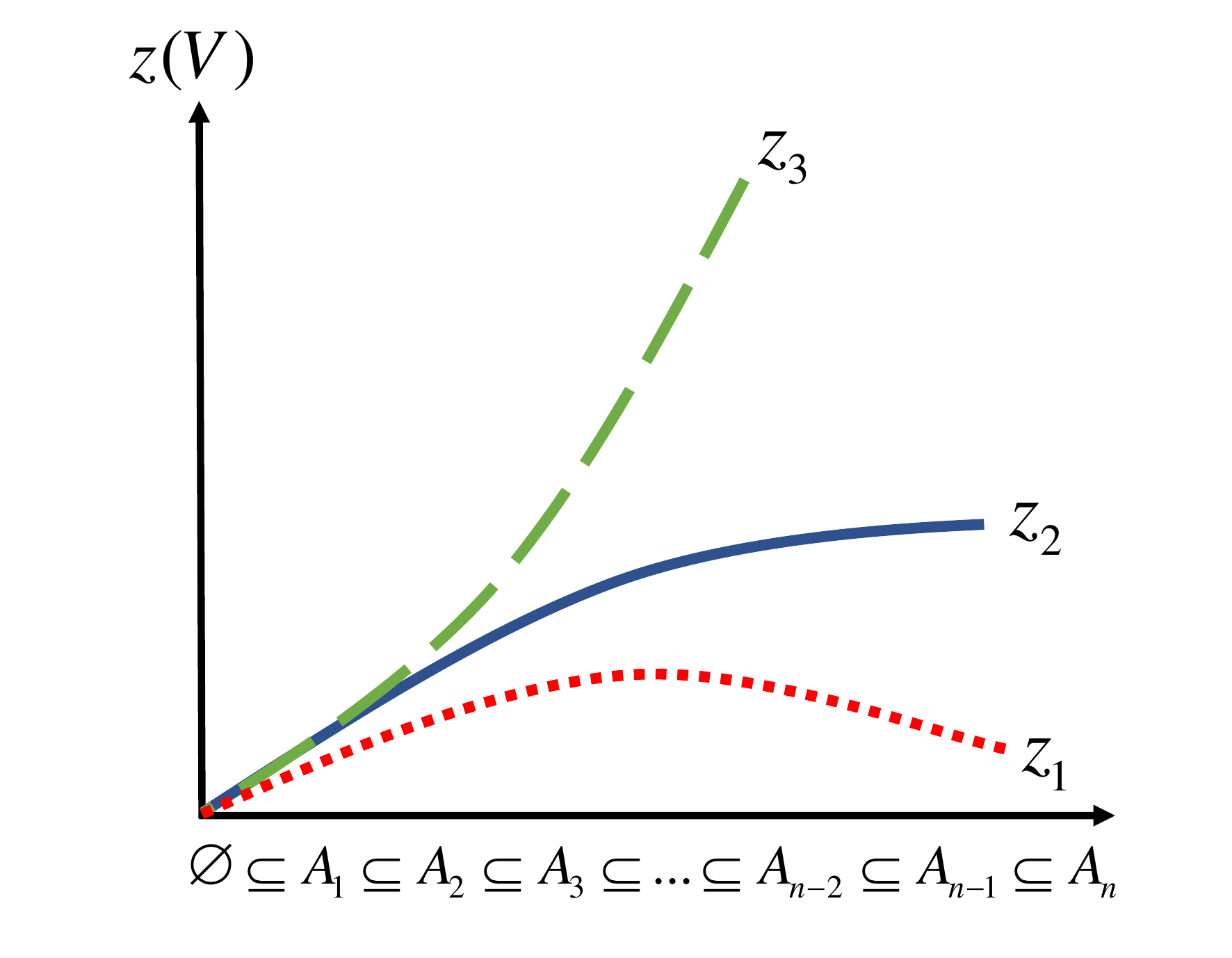}
\caption{Illustration of three types of set functions.}
\end{figure}

The hypervolume indicator has been proved to be non-decreasing submodular \cite{37ulrich2012bounding}. Now, we show that the minus IGD indicator and the minus IGD+ indicator are both non-decreasing submodular. The minus here means to take the opposite number of the indicator value. 
\begin{theorem}
The minus IGD indicator is non-decreasing submodular.
\end{theorem}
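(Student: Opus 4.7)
The plan is to reduce the statement to a pointwise inequality, one reference point at a time, and then average. Write $f(S) = -IGD(S,R)$ and, for each fixed $r \in R$, introduce the per-reference-point quantity
\[
g_S(r) \;=\; \min_{s \in S}\,\sqrt{\sum_{i=1}^{m}(s_i - r_i)^2},
\]
so that $IGD(S,R) = \frac{1}{|R|}\sum_{r \in R} g_S(r)$. Everything will follow from elementary properties of $g_S(r)$.

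First I would verify the non-decreasing property. If $S_2 \subseteq S_1$, then the minimum in $g_{S_1}(r)$ is taken over a superset of that in $g_{S_2}(r)$, so $g_{S_1}(r) \le g_{S_2}(r)$ for every $r$. Averaging over $R$ yields $IGD(S_1,R) \le IGD(S_2,R)$, i.e., $f(S_1) \ge f(S_2)$.

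Next I would prove submodularity by computing the marginal gain of adding a point $p \notin S$ explicitly. Since $g_{S \cup \{p\}}(r) = \min\{g_S(r),\, d(p,r)\}$, where $d(p,r)$ denotes the Euclidean distance, we get
\[
f(S \cup \{p\}) - f(S) \;=\; \frac{1}{|R|}\sum_{r \in R}\bigl(g_S(r) - \min\{g_S(r),\,d(p,r)\}\bigr) \;=\; \frac{1}{|R|}\sum_{r \in R}\max\bigl\{0,\;g_S(r) - d(p,r)\bigr\}.
\]
Now fix $r$ and write $a = g_{S_1}(r)$, $b = g_{S_2}(r)$, $c = d(p,r)$. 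Because $S_2 \subseteq S_1$, the non-decreasing step above gives $a \le b$. The map $t \mapsto \max\{0,\,t - c\}$ is monotonically non-decreasing on $\mathbb{R}$, hence $\max\{0,\,a-c\} \le \max\{0,\,b-c\}$, i.e.,
\[
\max\{0,\;g_{S_1}(r) - d(p,r)\} \;\le\; \max\{0,\;g_{S_2}(r) - d(p,r)\}.
\]
Averaging this pointwise inequality over $r \in R$ produces exactly
\[
f(S_1 \cup \{p\}) - f(S_1) \;\le\; f(S_2 \cup \{p\}) - f(S_2),
\]
which is the defining submodular inequality.

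There is no substantial obstacle: the argument is a one-line reduction once the marginal gain is rewritten using $\max\{0, g_S(r) - d(p,r)\}$, and the only technical ingredient is monotonicity of the truncation $t \mapsto (t-c)^{+}$. The same template will transfer verbatim to IGD+ in the next theorem, since the proof never uses any property of the Euclidean distance beyond the fact that $d(p,r)$ is a fixed nonnegative number; replacing $d$ by the IGD+ distance $D^{+}$ leaves every step intact.
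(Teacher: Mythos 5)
Your proof is correct. It takes a noticeably different route from the paper's: the paper tracks, for each added solution, the index set of reference points whose nearest distance changes ($I_1$, $I_2$) together with the distance decrements $\Delta d_i^x$, $\Delta d_i^y$, compares the resulting sums (observing that on $I_1\cap I_2$ only $\max\{\Delta d_i^x,\Delta d_i^y\}$ survives), and then must invoke a mathematical induction to pass from the single-element case $B=A\cup\{x\}$ to a general superset $B=A\cup X$. You instead write the per-reference-point nearest distance $g_S(r)$, note that $g_{S\cup\{p\}}(r)=\min\{g_S(r),d(p,r)\}$, express the marginal gain in closed form as $\frac{1}{|R|}\sum_{r}\max\{0,g_S(r)-d(p,r)\}$, and reduce everything to the monotonicity of $t\mapsto\max\{0,t-c\}$ combined with $g_{S_1}(r)\le g_{S_2}(r)$ for $S_2\subseteq S_1$. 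This handles arbitrary nested pairs $S_2\subseteq S_1$ in one step, so no induction is needed, and the bookkeeping with index sets disappears. What the paper's version buys is an explicit description of exactly which reference points contribute to the IGD improvement and by how much, which foreshadows the incremental update arrays used in the algorithms; what your version buys is brevity, generality, and a transparent reason why the argument transfers verbatim to IGD+ (only the fact that $d(p,r)$ is a fixed scalar is used).
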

\noindent \begin{proof}
Let us consider a solution set \(A\), a reference set \(R\) and two solutions \(x\) and \(y\) such that \(x \notin A\) and \(y \notin A\). By adding a solution \(x\) to \(A\), the nearest distance from some reference point to $A$ will be changed. Let us denote the set of indexes of these points as \( I_1\), and the difference between the original distance and the new distance as \(\Delta d_i^x, i \in I_1\). Similarly, by adding a solution \(y\), the set of indexes of reference points whose nearest distance are changed is denoted as  \(I_2\), and the corresponding distance differences are denoted as \(\Delta d_i^y, i \in I_2\). The notations used in the proof are summaries in Table 1.

\begin{table}[]
\caption{Summary of Notations}
\begin{tabular}{p{40pt}p{180pt}}
\toprule
\textbf{Notation} & \textbf{Description} \\ \midrule
$A, B, X$   &    Solution sets  \\
$R$   &    Reference set  \\
$x, y$   &  Solutions that do not belong to $A$     \\
$i$   &  Index of the reference point in the reference set   \\
$I_1, I_2$        &    Sets of reference point indexes whose closest distance to the solution set are changed after adding $x$ and $y$, respectively   \\
$\Delta d_i^x$, $\Delta d_i^y$     &  Change of the closest distance between reference point $i$ and the solution set by adding $x$ and $y$, respectively   \\ \bottomrule
\end{tabular}
\end{table}

According to the definition of IGD, $IGD(A\cup \{x\}) = IGD(A)-  \frac{1}{|R|} \sum_{i \in I_1}\Delta d_i^x.$ Since $\Delta d_i^x$  is larger than zero, the IGD indicator is non-increasing. Therefore, the minus IGD indicator is non-decreasing.

	Similarly, by adding a solution $y$ to a solution set $A\cup\{x\}$,  
	\begin{align*}
	    IGD(A\cup\{x\}\cup\{y\}) = & IGD(A) - \frac{1}{|R|}\sum_{i \in I_1 \backslash (I_1 \cap I_2)} \Delta d_i^x \\ & - \frac{1}{|R|}\sum_{i \in I_2\backslash(I_1 \cap I_2)} \Delta d_i^y \\ &- \frac{1}{|R|}\sum_{i \in (I_1 \cap I_2)} \max\{\Delta d_i^x, \Delta d_i^y\}.
	\end{align*}

    In
    \begin{align*}
      & \frac{1}{|R|}\sum_{i \in I_1 \backslash (I_1 \cap I_2)} \Delta d_i^x + \frac{1}{|R|}\sum_{i \in I_2\backslash(I_1 \cap I_2)} \Delta d_i^y \\ & + \frac{1}{|R|}\sum_{i \in (I_1 \cap I_2)} \max\{\Delta d_i^x, \Delta d_i^y\},  
    \end{align*}
     if $i \in (I_1 \cap I_2)$, only the larger one between $\Delta d_i^x$ and $\Delta d_i^y$  will be added while in $\sum_{i \in I_1} \Delta d_i^x + \sum_{i \in I_2} \Delta d_i^y$  all $\Delta d_i^x$ and $\Delta d_i^y$ will be added together. 
     
     Hence,
     \begin{align*}
 &\frac{1}{|R|}\sum_{i \in I_1 \backslash (I_1 \cap I_2)} \Delta d_i^x + \frac{1}{|R|}\sum_{i \in I_2\backslash(I_1 \cap I_2)} \Delta d_i^y \\&+ \frac{1}{|R|}\sum_{i \in (I_1 \cap I_2)} \max\{\Delta d_i^x, \Delta d_i^y\} \leq \sum_{i \in I_1} \Delta d_i^x + \sum_{i \in I_2} \Delta d_i^y.\\
& -IGD(A\cup \{y\}) - (-IGD(A)) = \frac{1}{|R|}\sum_{i \in I_2}\Delta d_i^y.
\end{align*}   

\begin{align*}
    & IGD(A\cup \{x\} \cup \{y\}) - (-IGD(A\cup\{x\})) \\& = \frac{1}{|R|}\sum_{i \in I_1 \backslash (I_1 \cap I_2)} \Delta d_i^x + \frac{1}{|R|}\sum_{i \in I_2\backslash(I_1 \cap I_2)} \Delta d_i^y \\& + \frac{1}{|R|}\sum_{i \in (I_1 \cap I_2)} \max\{\Delta d_i^x, \Delta d_i^y\} - \frac{1}{|R|}\sum_{i \in I_2}\Delta d_i^x \\& \leq  \frac{1}{|R|}\sum_{i \in I_1} \Delta d_i^x +  \frac{1}{|R|}\sum_{i \in I_2} \Delta d_i^y - \frac{1}{|R|}\sum_{i \in I_1} \Delta d_i^x \\& = \frac{1}{|R|}\sum_{i \in I_2}\Delta d_i^y.
\end{align*}

Therefore, $ -IGD(A\cup \{y\}) - (-IGD(A)) \geq IGD(A\cup \{x\} \cup \{y\}) - (-IGD(A\cup\{x\})).$

Then, by mathematical induction, we can obtain that

$ -IGD(A\cup \{y\}) - (-IGD(A)) \geq IGD(A\cup X \cup \{y\}) - (-IGD(A\cup X)),$where $X$ is a solution set and $X \cap A = \emptyset$. If we let $B = A \cup X$, the above formula can be rewritten as 
$ -IGD(A\cup \{y\}) - (-IGD(A))  \geq IGD(B \cup \{y\}) - (-IGD(B)), A \subset B,$
which is the same as the definition of the submodular function. Hence, the minus IGD indicator is non-decreasing submodular. 
\end{proof}
\begin{theorem}
The minus IGD+ indicator is non-decreasing submodular.
\end{theorem}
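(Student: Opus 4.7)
The plan is to mirror the proof of Theorem 1, observing that its argument depends only on structural properties shared by IGD and IGD$^+$, not on any specific feature of the Euclidean distance. Concretely, I would first note that the IGD$^+$ distance $D^+(s,r)$ defined in equation (8) is non-negative for every solution $s$ and reference point $r$, and that IGD$^+$ aggregates the per-reference-point minima over the selected set via the same "average of nearest distances" formula as IGD. These are exactly the two features of the Euclidean distance that the previous proof actually exploits.

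Given this observation, I would repeat the argument of Theorem 1 almost verbatim, substituting $D^+$ for the Euclidean distance throughout. That is, given a solution set $A$, two candidate solutions $x, y \notin A$, and a reference set $R$, define $I_1$ and $I_2$ as the sets of reference-point indices whose nearest IGD$^+$ distance to $A$ strictly decreases upon adding $x$ and $y$ respectively, and let $\Delta d_i^x, \Delta d_i^y > 0$ denote the corresponding improvements. The non-decreasing property of minus IGD$^+$ then follows at once because each $\Delta d_i^x \geq 0$, so $IGD^+(A \cup \{x\}, R) \leq IGD^+(A, R)$.

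For submodularity, the crucial elementary inequality transfers directly: for any $i \in I_1 \cap I_2$, the joint improvement resulting from adding both $x$ and $y$ to $A$ equals $\max\{\Delta d_i^x, \Delta d_i^y\}$, which is bounded above by $\Delta d_i^x + \Delta d_i^y$. From here the same chain of inequalities as in Theorem 1 yields $-IGD^+(A\cup\{y\}) - (-IGD^+(A)) \geq -IGD^+(A\cup\{x\}\cup\{y\}) - (-IGD^+(A\cup\{x\}))$. An induction on the size of $X = B \setminus A$ then extends this from single-element extensions to arbitrary $B \supseteq A$, giving the definition of a submodular set function.

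The main obstacle is less mathematical depth than careful bookkeeping: I must verify that every inequality in Theorem 1's proof remains valid when Euclidean distance is replaced by $D^+$. This reduces to checking that no step relies on properties specific to Euclidean geometry such as symmetry, the triangle inequality, or strict positivity for distinct points. Since the argument only uses that $D^+$ is a non-negative real-valued function that is minimized over the current solution set and averaged over the reference set, the proof transfers directly and yields the claim that minus IGD$^+$ is non-decreasing submodular.
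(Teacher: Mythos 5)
Your proposal is correct and follows essentially the same route as the paper, which simply observes that the proof of Theorem 1 carries over verbatim once the Euclidean distance is replaced by the IGD$^+$ distance. Your explicit identification of the two properties actually used (non-negativity of the per-pair distance and the ``average of per-reference-point minima'' structure) is a slightly more careful justification of that substitution than the paper gives, but it is the same argument.
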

\begin{proof}
We can prove that the minus IGD+ indicator is also non-decreasing submodular in the same manner as the proof for the submodularity of IGD indicator. The only difference is to replace the Euclidean distance with the IGD+ distance. 
\end{proof}

\subsection{Algorithm Proposal}
In each iteration of the hypervolume greedy inclusion algorithm, we only need to identify the solution with the largest hypervolume contribution. However, we usually calculate the hypervolume contributions of all solutions. Since the calculation of the contribution of each solution is time-consuming, such an algorithm is not efficient. 

As discussed in the last subsection, the hypervolume indicator is non-decreasing submodular. This property can help us avoid unnecessary calculations. The submodular property of the hypervolume indicator means that the hypervolume contribution of a solution to the selected solution subset S never increases as the number of solutions in $S$ increases in a greedy inclusion manner. Hence, instead of recomputing the hypervolume contribution of every candidate solution in each iteration, we can utilize the following lazy evaluation mechanism. 

We use a list \(C\) to store the candidate (i.e., unselected) solutions and their tentative HVC (hypervolume contribution) values. The tentative HVC value of each solution is initialized with its hypervolume (i.e., its hypervolume contribution when no solution is selected). The tentative HVC value of each solution is the upper bound of its true hypervolume contribution. For finding the solution with the largest hypervolume contribution from the list, we pick the most promising solution with the largest tentative HVC value and recalculate its hypervolume contribution to the current solution subset \(S\). If the recalculated hypervolume contribution of this solution is still the largest in the list, we do not have to calculate the hypervolume contributions of the other solutions. This is because the hypervolume contribution of each solution never increases through the execution of greedy inclusion. In this case (i.e., if the recalculated hypervolume contribution of the most promising solution is still the largest in the list), we move this solution from the list to the selected solution subset \(S\). If the recalculated hypervolume contribution of this solution is not the largest in the list, its tentative HVC value is updated with the recalculated value. Then the most promising solution with the largest tentative HVC value in the list is examined (i.e., its hypervolume contribution is recalculated). This procedure is iterated until the recalculated hypervolume contribution is the largest in the list.

In many cases, the recalculation of the hypervolume contribution of each solution results in the same value as or slightly smaller value than its tentative HVC value in the list since the inclusion of a single solution to the solution subset $S$ changes the hypervolume contributions of only its neighbors in the objective space. Thus, the solution with the largest hypervolume contribution is often found without examining all solutions in the list. By applying this lazy evaluation mechanism, we can avoid a lot of unnecessary calculations in hypervolume-based greedy inclusion algorithms.
The details of the proposed lazy greedy inclusion hypervolume subset selection (LGI-HSS) algorithm are shown in Algorithm 4.

\begin{algorithm}
	\caption{Lazy Greedy Inclusion Hypervolume Subset Selection (LGI-HSS)}
	\begin{algorithmic}[1]
    \REQUIRE  $V$ (A set of non-dominated solutions), $k$ (Solution subset size)
    \ENSURE $S$  (The selected subset from $V$)
     \IF{$|V| < k $}
           \STATE $S = V$
     \ELSE
  			\STATE  $S = \emptyset$,  $C = \emptyset$
  			\FOR {\textbf{each} $s_i$ in $V$}
  			    \STATE Insert $\left(s_i, HV(\{s_i\})\right)$ into $C$
  			\ENDFOR
    		\WHILE{$|S| < k $}
    		    \WHILE{$C \not= \emptyset$}
    		    \STATE $c_{max}$ = Solution with the largest HVC in $C$
    		    \STATE Update the HVC of $c_{max}$ to $S$
    		    \IF{$c_{max}$ has the largest HVC in $C$}
    		        \STATE $S = S \cup \{c_{max}\}$
    		        \STATE $C = C \setminus \{c_{max}\}$
    		        \STATE \textbf{break}
    		    \ENDIF
    		    \ENDWHILE
             \ENDWHILE  
      \ENDIF
	\end{algorithmic}
\end{algorithm}

For the IGD and IGD+ indicators, the same idea can be used. Since the submodularity of the IGD and IGD+ indicators has been proved in section IV.A, we can obtain that the IGD (IGD+) improvement of a solution to the selected solution set $S$ will never increase. Thus, we do not need to calculate other solutions if the recalculated IGD (IGD+) improvement of a solution is the largest among all tentative IGD (IGD+) improvement values. Besides, to accelerate the IGD (IGD+) improvement calculation, we use our proposal for the standard IGD greedy inclusion in Section III. An array \(D\) is used to store the distance from each reference point to the nearest solution in the solution set \(S\). The details of the lazy greedy inclusion IGD subset selection algorithm (LGI-IGDSS) is shown in Algorithm 5. It can be changed to the algorithm for the IGD+ indicator (LGI-IGD+SS) by using the IGD+ distance instead of the Euclidean distance to calculate the distances between each reference point and each solution.

Note that we need to find the solution with the largest hypervolume contribution in Algorithm 4 and the largest IGD (IGD+) improvement in Algorithm 5. The priority queue implemented by the maximum heap is used to accelerate the procedure.

\begin{algorithm}
	\caption{Lazy Greedy Inclusion IGD Subset Selection (LGI-IGDSS)}
	\begin{algorithmic}[1]
    \REQUIRE  $V$ (A set of non-dominated solutions), $k$ (Solution subset size)
    \ENSURE $S$  (The selected subset from $V$)
     \IF{$|V| <  k $}
           \STATE $S = V$
     \ELSE
  			\STATE  $S = \emptyset$,  $C = \emptyset$
            \STATE $s_{min}$ = Solution in $V\backslash S$ with the smallest $IGD(\{s_i\}, V)$
            \STATE $S = S \cup \{s_{min}\}$
            \STATE \(D'\) = Distance from each reference point to $s_{min}$

  			\FOR {\textbf{each} $s_i$ in $V\setminus S$}
  			    \STATE Insert $\left(s_i, IGD(\{s_{min}\}, R)- IGD(\{s_{min}, s_{i}\}, R)\right)$ into $C$
  			\ENDFOR
    		\WHILE{$|S| < k $}
    		    \WHILE{$C \not= \emptyset$}
    		    \STATE $c_{max}$ = Solution with the largest IGD improvement in $C$
                \STATE Distance from each reference point to $c_{max}$
    		    \STATE Update the IGD improvement of $c_{max}$ to $mean(D') - mean(min(D', D))$
    		    \IF{$c_{max}$ has the largest IGD improvement in $C$}
    		        \STATE $S = S \cup \{c_{max}\}$
    		        \STATE $C = C \setminus \{c_{max}\}$
                    \STATE $D$ = Distance from each reference point to $c_{max}$
                    \STATE $D = min(D', D)$
    		        \STATE \textbf{break}
    		    \ENDIF
    		    \ENDWHILE
             \ENDWHILE  
      \ENDIF
	\end{algorithmic}
\end{algorithm}

The proposed algorithms only need one parameter $k$, which is the solution subset size. This parameter is needed in all subset selection algorithms. That is, the proposed algorithms do not need any additional parameter. 

The idea of the lazy evaluation was proposed by Minoux \cite{38minoux1978accelerated} to accelerate the greedy algorithm for maximizing submodular functions. Then, it was applied to some specific areas such as influence maximization problems \cite{39leskovec2007cost} and network monitoring \cite{47NetworkInference}. Minoux \cite{38minoux1978accelerated} proved that if the function is submodular and the greedy solution is unique, the solution produced by the lazy greedy algorithm and the original greedy algorithm is identical. Since the hypervolume, IGD and IGD+ indicators are submodular, the proposed algorithms (i.e., Algorithm 4 and Algorithm 5) find the same subsets as the corresponding original greedy inclusion algorithms if the greedy solutions are unique. 

\subsection{An Illustrative Example}
Let us explain the proposed algorithm using a simple example. Fig. 6 shows the changes of the hypervolume contribution in the list \(C\). The value in the parentheses is the stored HVC value of each solution to the selected subset. For illustration purposes, the solutions in the list are sorted by the stored HVC values. However, in the actual implementation of the algorithm, the sorting is not necessarily needed (especially when the number of candidate solutions is huge). This is because our algorithm only needs to find the most promising candidate solution with the largest HVC value in the list. 
Fig. 6 (i) shows the initial list \(C\) including five solutions \(a\), \(b\), \(c\), \(d\) and \(e\). The current solution subset is empty. In Fig. 6 (i), solution a has the largest HVC value. Since the initial HVC value of each solution is the true hypervolume contribution to the current empty solution subset \(S\), no recalculation is needed. Solution \(a\) is moved from the list to the solution subset.

In Fig. 6 (ii), solution \(b\) has the largest HVC value in the list after solution \(a\) is moved. Thus, the hypervolume contribution of \(b\) is to be recalculated. We assume that the recalculated HVC value is 4 as shown in Fig. 6 (iii).

Fig. 6 (iii) shows the list after the recalculation. Since the updated HVC value of \(b\) is not the largest, we need to choose solution \(e\) with the largest HVC value in the list and recalculate its hypervolume contribution. We assume that the recalculated HVC value is 6 as shown in Fig. 6 (iv).

Fig. 6 (iv) shows the list after the recalculation. Since the recalculated HVC value is still the largest in the list, solution \(e\) is moved from the list to the solution subset.

Fig. 6 (v) shows the list after the removal of \(e\). Solution \(c\) with the largest HVC value is examined.

In this example, for choosing the second solution from the remaining four candidates (\(b\), \(c\), \(d\) and \(e\)), we evaluate the hypervolume contributions of only the two solutions (\(b\) and \(e\)). In the standard greedy inclusion algorithm, all four candidates are examined. In this manner, the proposed algorithm decreases the computation time of the standard greedy inclusion algorithm.
\begin{figure}[htbp]
\centering
\includegraphics[width=0.36\textwidth, trim = 0 100 0 20, clip]{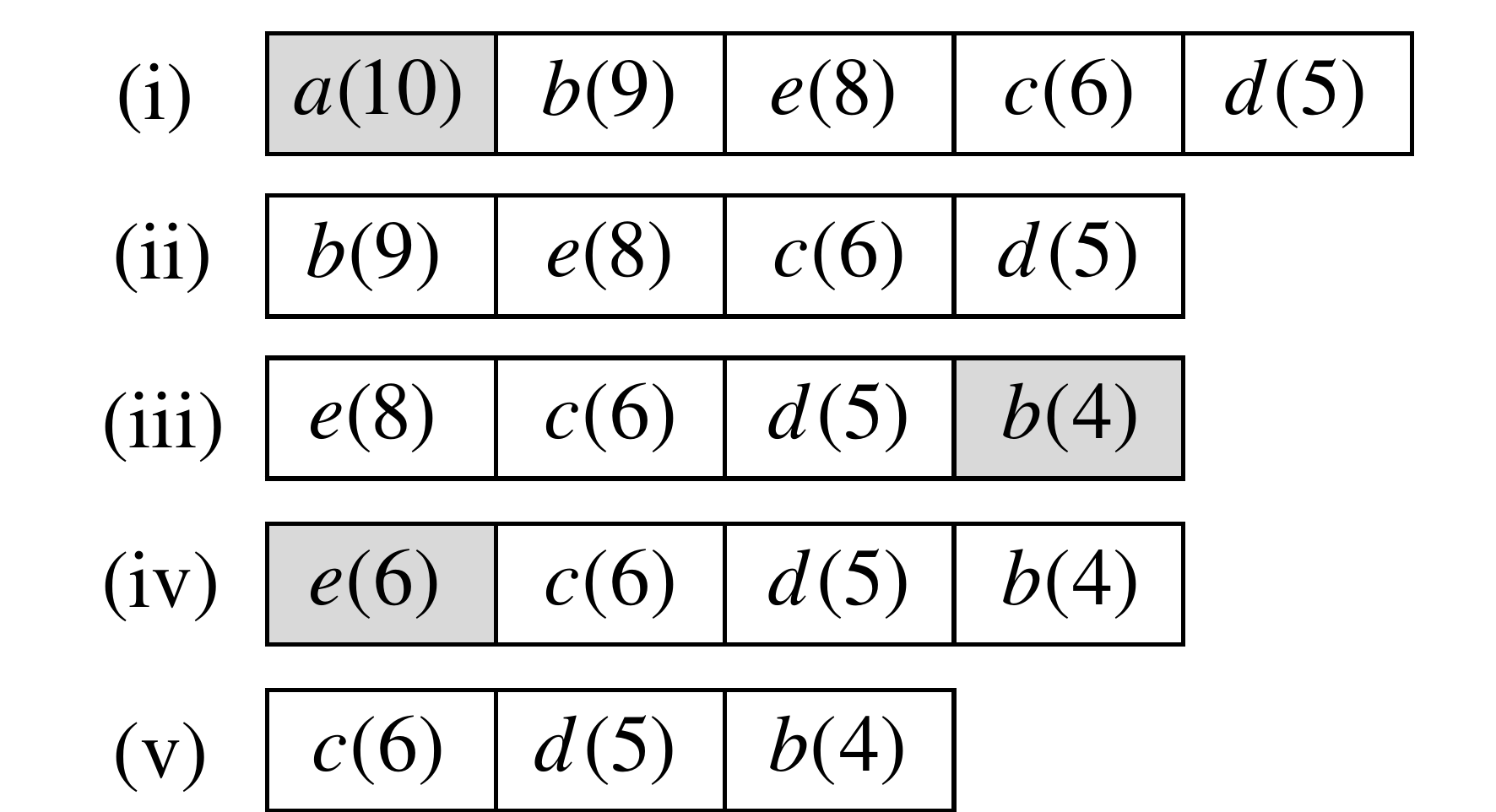}
\caption{Illustration of the proposed algorithm. The values in the parentheses are the stored tentative HVC values.}
\end{figure}

\section{Experiments}
\subsection{Algorithms Used in Computational Experiments }
The proposed lazy greedy inclusion hypervolume subset selection algorithm LGI-HSS is compared with the following two algorithms:
\subsubsection{Standard greedy inclusion hypervolume subset selection (GI-HSS)} This is the greedy inclusion algorithm described in Section II.C. When calculating the hypervolume contribution, the efficient method described in Section II.A is employed.
\subsubsection{Greedy inclusion hypervolume subset selection with hypervolume contribution updating (UGI-HSS)} The hypervolume contribution updating method proposed in FV-MOEA \cite{31jiang2014simple} (Algorithm 2) is used. Since Algorithm 2 is for greedy removal, it is changed for greedy inclusion here. It is the fastest known greedy inclusion algorithm applicable to any dimension. 

The proposed lazy greedy inclusion IGD subset selection algorithm LGI-IGDSS is compared with the standard greedy inclusion algorithm GI-IGDSS with the efficient IGD improvement evaluation method in Section III.B. The proposed lazy greedy inclusion IGD+ subset selection algorithm LGI-IGD+SS is compared with the standard greedy inclusion algorithm GI-IGD+SS with the efficient IGD+ improvement evaluation method in Section III.B.

Our main focus is the selection of a solution subset from an unbounded external archive. Since the number of solutions to be selected is much smaller than the number of candidate solutions: $k \ll n = |V|$ in HSSP, greedy removal is not efficient. Hence, greedy removal algorithms (e.g., those with IHSO* \cite{29Bradstreet2008fast} and IWFG \cite{30cox2016improving}) are not compared in this paper.
\subsection{Test Problems and Candidate Solution Sets}
To examine the performance of the above-mentioned subset selection algorithms, we use the following four representative test problems with different Pareto front shapes:

(i)	DTLZ2 \cite{41DTLZ} with a triangular concave (spherical) Pareto front. 

(ii) Inverted DTLZ2 (I-DTLZ2 \cite{42IDTLZ}) with an inverted triangular concave (spherical) Pareto front.

(iii) DTLZ1 \cite{41DTLZ} with a triangular linear Pareto front.

(iv) DTLZ7 \cite{41DTLZ} with a disconnected Pareto front.

For each test problem, we use three problem instances with five, eight and ten objectives (i.e., solution subset selection is performed in five-, eight- and ten-dimensional objective spaces). Candidate solution sets are generated by sampling solutions from the Pareto front of each problem instance. Four different settings of the candidate solution set size are examined: 5000, 10000, 15000 and 20000 solutions. First we uniformly sample 100,000 solutions from the Pareto front of each problem instance. In each run of a solution subset selection algorithm, a pre-specified number of candidate solutions (i.e., 5000, 10000, 15000 or 20000 solutions) are randomly selected from the generated 100,000 solutions. Computational experiments are performed 11 times for each setting of the candidate solution set size for each problem instance by each subset selection algorithm. The number of solutions to be selected is specified as 100. Thus, our problem is to select 100 solutions from 5000, 10000, 15000 and 20000 candidate solutions to maximize the hypervolume value, and to minimize the IGD and IGD+ values of the selected 100 solutions.
\subsection{Experimental Settings}
In each hypervolume subset selection algorithm, the reference point for hypervolume (contribution) calculation is specified as (1.1, 1.1, ..., 1.1) for all test problems independent of the number of objectives. We use the WFG algorithm\footnote{The code of the WFG algorithm is available from http://www.wfg.csse.uwa.edu.au/hypervolume/\#code.}  \cite{27while2011fast} for hypervolume calculation in each subset selection algorithm with the hypervolume indicator. All subset selection algorithms are coded by MatlabR2018a. The computation time of each run is measured on an Intel Core i7-8700K CPU with 16GB of RAM, running in Windows 10.
\subsection{Comparison of Hypervolume Subset Selection Algorithms}
The average run time of each hypervolume subset selection algorithm on each test problem is summarized in Table II (in the columns labelled as GI-HSS, UGI-HSS and LGI-HSS). Since the subset obtained by three algorithms on the same data set are the same (i.e., the proposed algorithm does not change the performance),  we only show the run time of each algorithm.

Compared with the standard GI-HSS algorithm, we can see that our LGI-HSS algorithm can reduce the computation time by 91\% to 99\%. That is, the computation time of our LGI-HSS is only 1-9\% of that of GI-HSS. By the increase in the number of objectives (i.e., by the increase in the dimensionality of the objective space), the advantage of LGI-HSS over the other two algorithms becomes larger. Among the four test problems in Table II, all the three algorithms are fast on the I-DTLZ2 problem and slow on the DTLZ2 and DTLZ1 problems.
\begin{table*}[]
\centering
\caption{Average Computation Time (in Seconds) on Different Problems Over 11 Runs. The Best Results are Highlighted by Bold. }
\label{table1}
\begin{tabular}{cc|ccc|cc|cc}
\toprule
Problem                                  & Candidate Solutions & GI-HSS  & UGI-HSS & LGI-HSS & GI-IGDSS & LGI-IGDSS & GI-IGD+SS & LGI-IGD+SS \\ \hline
\multirow{4}{*}{Five-Objective DTLZ2}   & 5,000       & 46.9    & 10.7    & \textbf{4.3}     & 54.9     & \textbf{4.5}       & 246.2     & \textbf{20.9}       \\
                                         & 10,000      & 100.2   & 21.5    & \textbf{8.9}     & 137.5    & \textbf{10.6}      & 980.5     & \textbf{81.8}       \\
                                         & 15,000      & 164.7   & 31.2    & \textbf{14.5}    & 248.2    & \textbf{18.6}      & 2197.5    & \textbf{179.8}      \\
                                         & 20,000      & 243.2   & 40.7    & \textbf{21.1}    & 398.5    & \textbf{29.1}      & 3866.0    & \textbf{311.6}      \\ \hline
\multirow{4}{*}{Eight-Objective DTLZ2}   & 5,000       & \textbf{}413.8   & \textbf{}75.4    & \textbf{18.8}    & 53.4     & \textbf{4.6}       & 253.4     & \textbf{19.5}       \\
                                         & 10,000      & 726.0   & 132.7   & \textbf{32.5}    & 149.2    & \textbf{12.0}      & 1015.4    & \textbf{74.4}       \\\textbf{}
                                         & 15,000      & 1015.2  & 188.3   & \textbf{47.3}    & 284.7    & \textbf{21.8}      & 2294.9    & \textbf{165.7}      \\
                                         & 20,000      & 1251.2  & 245.6   & \textbf{61.3}    & 1440.1   & \textbf{102.8}     & 4088.7    & \textbf{292.4}      \\ \hline
\multirow{4}{*}{Ten-Objective DTLZ2}     & 5,000       & 3521.7  & 540.6   &\textbf{} \textbf{138.1}   & 55.8     & \textbf{5.1}       & 263.9     & \textbf{20.1}       \\
                                         & 10,000      & 5569.6  & 830.8   & \textbf{217.4}   & 159.0    & \textbf{14.2 }     & 1054.7    & \textbf{76.8}       \\
                                         & 15,000      & 7646.2  & 1277.6  & \textbf{291.8}   & 995.2    & \textbf{83.0}      & 2356.0    & \textbf{171.4}      \\
                                         & 20,000      & 9116.0  & 1552.1  & \textbf{357.5}   & 1665.6   & \textbf{124.5}     & 4175.9    & \textbf{297.2}      \\ \hline
\multirow{4}{*}{Five-Objective I-DTLZ2} & 5,000       & 35.7    & 4.1     & \textbf{1.8}     & 52.7     & \textbf{4.3}       & 247.3     & \textbf{18.2}       \\
                                         & 10,000      & 82.7    & 8.2     & \textbf{4.0}     & 137.9    & \textbf{10.7}      & 979.5     & \textbf{71.1}       \\
                                         & 15,000      & 140.4   & 12.0    & \textbf{6.9}     & 245.9    & \textbf{18.3}      & 2190.2    & \textbf{155.0}      \\
                                         & 20,000      & 219.8   & 15.5    & \textbf{10.4}    & 410.4    & \textbf{30.5}      & 3890.5    & \textbf{276.7}      \\ \hline
\multirow{4}{*}{Eight-Objective I-DTLZ2} & 5,000       & 46.0    & 5.4     & \textbf{0.8}     & 59.0     & 5.0       & 255.7     & \textbf{17.1}       \\\textbf{}
                                         & 10,000      & 108.5   & 11.2    & \textbf{1.7}     & 158.9    & \textbf{12.7}      & 1017.6    & \textbf{65.7}       \\
                                         & 15,000      & 190.2   & 18.7    & \textbf{2.9}     & 302.3    & \textbf{22.8}      & 2302.6    & \textbf{145.5}      \\
                                         & 20,000      & 283.8   & 24.3    & \textbf{4.2}     & 1649.3   & \textbf{115.3}     & 4070.0    & \textbf{254.5}      \\ \hline
\multirow{4}{*}{Ten-Objective I-DTLZ2}   & 5,000       & 43.9    & 8.4     & \textbf{0.5}     & 55.7     & \textbf{5.2}       & 263.4     & \textbf{17.9}       \\
                                         & 10,000      & 108.2   & 14.9    & \textbf{1.2}     & 157.5    & \textbf{14.1}      & 1047.2    & \textbf{67.3}       \\
                                         & 15,000      & 195.5   & 22.0    & \textbf{2.2}     & 991.2    & \textbf{82.7}      & 2353.0    & \textbf{148.9}      \\
                                         & 20,000      & 304.1   & 33.9    & \textbf{3.3}     & 1669.7   & \textbf{135.9}     & 4162.9    & \textbf{272.8}      \\ \hline
\multirow{4}{*}{Five -Objective DTLZ1}   & 5,000       & 48.6    & 17.9    & \textbf{6.5}     & 49.3     & \textbf{3.9}       & 249.8     & \textbf{18.2}       \\
                                         & 10,000      & 108.3   & 35.6    & \textbf{14.7}    & 131.0    & \textbf{9.6}       & 985.1     & \textbf{70.8}       \\
                                         & 15,000      & 179.4   & 53.7    & \textbf{24.4}    & 247.5    & \textbf{17.6}      & 2202.8    & \textbf{154.3}      \\
                                         & 20,000      & 262.8   & 70.9    & \textbf{36.2}    & 392.4    & \textbf{27.5}      & 3886.7    & \textbf{270.3}      \\ \hline
\multirow{4}{*}{Eight-Objective DTLZ1}   & 5,000       & 362.8   & 61.6    & \textbf{35.7}    & 53.9     & \textbf{4.2}       & 255.9     & \textbf{16.2 }      \\\textbf{}
                                         & 10,000      & 777.6   & 133.9   & \textbf{79.4}    & 151.5    & \textbf{10.9}      & 1016.0    & \textbf{61.6}       \\
                                         & 15,000      & 1239.6  & 201.7   & \textbf{126.3}   & 277.9    & \textbf{19.4}      & 2282.9    & \textbf{134.4}      \\
                                         & 20,000      & 1690.3  & 285.8   & \textbf{175.5}   & 1435.8   & \textbf{92.1}      & 4070.5    & \textbf{238.6}      \\ \hline
\multirow{4}{*}{Ten-Objective DTLZ1}     & 5,000       & 2436.2  & 278.5   & \textbf{228.2}   & 55.6     & \textbf{4.3}       & 262.5     & \textbf{16.0}       \\\textbf{}
                                         & 10,000      & 5455.5  & 623.8   & \textbf{505.4}   & 158.0    & \textbf{12.2}      & 1045.2    & \textbf{65.9}       \\
                                         & 15,000      & 8504.7  & 1038.3  & \textbf{809.7}   & 995.7    & \textbf{69.3}      & 2342.7    & \textbf{141.7}      \\
                                         & 20,000      & 11552.2 & 1343.0  & \textbf{1176.0}  & 1679.3   & \textbf{118.9}     & 4144.4    & \textbf{260.1}      \\ \hline
\multirow{4}{*}{Five-Objective DTLZ7}   & 5,000       & 40.1    & 13.0    & \textbf{2.4}     & 48.5     &\textbf{ 4.3}       & \textbf{}247.7     & \textbf{20.1}       \\
                                         & 10,000      & 91.9    & 25.8    & \textbf{5.5}     & 131.5    & \textbf{11.1}      & 978.8     & \textbf{76.5}       \\
                                         & 15,000      & 155.2   & 40.4    & \textbf{9.0}     & 245.9    & \textbf{20.2}      & 2198.2    & \textbf{172.0}      \\
                                         & 20,000      & 228.5   & 56.7    & \textbf{13.3}    & 392.7    & \textbf{31.8}      & 4067.4    & \textbf{320.4 }     \\ \hline
\multirow{4}{*}{Eight-Objective DTLZ7}   & 5,000       & 52.8    & 26.5    & \textbf{1.5}     & 52.6     & \textbf{4.9}       & 269.5     & \textbf{19.3}       \\
                                         & 10,000      & 122.5   & 48.9    & \textbf{3.3}     & 147.1    & \textbf{13.1}      & 1108.5    & \textbf{69.5}       \\
                                         & 15,000      & 208.7   & 73.6    & \textbf{5.6}     & 289.1    & \textbf{24.5}      & 2281.5    & \textbf{152.0}      \\\textbf{}
                                         & 20,000      & 314.7   & 97.4    & \textbf{8.4}     & 1439.8   & \textbf{114.9}     & 4046.7    & \textbf{269.6}      \\ \hline
\multirow{4}{*}{Ten-Objective DTLZ7}     & 5,000       & 57.9    & 27.6    & \textbf{1.1}     & 55.6     & 5.4       & 262.7     & \textbf{22.8}       \\
                                         & 10,000      & 136.3   & 55.2    & \textbf{2.5 }    & 159.0    & \textbf{14.8}      & 1041.5    & \textbf{90.0}       \\
                                         & 15,000      & 237.2   & 82.3    & \textbf{4.3}     & 985.3    & \textbf{84.2}      & 2343.5    & \textbf{202.9}      \\
                                         & 20,000      & 361.7   & 110.9   & \textbf{6.5}     & 1715.5   & \textbf{143.5}     & 4144.0    & \textbf{362.3}      \\ \bottomrule   
\end{tabular}
\end{table*}

Even when we compare our LGI-HSS algorithm with the fastest known greedy inclusion algorithm UGI-HSS, LGI-HSS is also faster. On DTLZ2 and I-DTLZ2, when the number of objectives is not very large (i.e., five-objective problems), the difference in the average computation time between the two algorithms is not large (the LGI-HSS average computation time is 41-67\% of that of UGI-HSS). When the number of objectives is larger (i.e., eight-objective and ten-objective problems), the difference in the average computation time between the two algorithms becomes larger (i.e., LGI-HSS needs only 6-25\% of the UGI-HSS computation time). On DTLZ7, LGI-HSS needs only 4-24\% of the UGI-HSS computation time. On DTLZ1, the difference between LGI-HSS and UGI-HSS is small: LGI-HSS needs 36-88\% of the UGI-HSS computation time. The reason for the small difference is that each candidate solution on the linear Pareto front of DTLZ1 has a similar hypervolume contribution value. As a result, recalculation is frequently performed in our LGI-HSS. On the contrary, there exist large differences among hypervolume contribution values of candidate solutions on the nonlinear Pareto fronts of DTLZ2, I-DTLZ2 and DTLZ7. This leads to a less frequent update of their hypervolume contribution values (i.e., high efficiency of LGI-HSS).  

From the three columns by HSS in Table I, we can also observe that the average computation time of each algorithm did not severely increase with the increase in the number of objectives (i.e., with the increase in the dimensionality of the objective space) for DTLZ7 and I-DTLZ2. In some cases, the average computation time decreased with the increase in the number of objectives. The reasons are as follows. Firstly, the WFG algorithm is used in the three algorithms to calculate the hypervolume contribution of each solution. The computation time of hypervolume contribution does not increase severely as the number of objectives increases. Besides, the total number of solution evaluations needed for LGI-HSS will decrease on some problems as the number of objectives increases. This is because the difference in the hypervolume contribution values of the candidate solutions increases with the number of objectives, which leads to the decrease in the number of updates of the hypervolume contribution value of each solution.

\subsection{Comparison of IGD/IGD+ Subset Selection Algorithms}
As can be observed from the last four columns of Table II, the proposed LGI-IGDSS and LGI-IGD+SS are much faster than the standard greedy algorithms (i.e., GI-IGDSS and GI-IGD+SS). Compared with the GI-IGDSS algorithm, our LGI-IGDSS algorithm needs only 6-10\% computation time. Compared with the GI-IGD+SS algorithm, our LGI-IGD+SS algorithm needs only 6-9\% computation time. 

Different from the hypervolume subset selection algorithms, the computation time of the IGD and IGD+ subset selection algorithms does not have a large difference among different problems. One interesting observation is that the computation time of LGI-IGD+SS does not increase with the number of objectives whereas that of LGI-IGDSS clearly increases with the number of objectives. This is because the difference in the IGD+ contribution values of the candidate solutions increases with the number of objectives, which leads to the decrease in the number of updates of the IGD+ contribution value of each solution. However, the difference in the IGD contribution values of the candidate solutions does not clearly increase with the number of objectives. This observation suggests that the IGD+ indicator is more similar to the hypervolume indicator than the IGD indicator, which was suggested by the optimal distributions of solutions for each indicator \cite{46Comparision}.

\subsection{Number of Solution Evaluations}
To further examine the effectiveness of the proposed LGI-HSS, we monitor the number of solution evaluations (i.e., contribution calculations) to select each solution used by GI-HSS and LGI-HSS. In GI-HSS, all the remaining solutions need to be re-evaluated during each iteration. For example, when the candidate solution set size is 10,000, 10,000$ - (i - 1)$ solutions are examined to select the \(i\)-th solution in GI-HSS. However, LGI-HSS can choose the same $i$-th solution by evaluating only a small number of solutions. In Fig. 7, we show the number of examined solutions to choose each solution (i.e., the $i$-th solution for $i$ = 1, 2, ..., 100) in a single run of LGI-HSS on the eight-objective DTLZ2 and I-DTLZ2 problems where the candidate solution size is 10,000. The single run with the median computation time among 11 runs on each problem is selected in Fig. 7. For comparison, the results by GI-HSS are also shown, which is the line specified by 10,000$-(i-1)$. For selecting the first solution, both algorithms examine all the given 10,000 solutions. After that, the proposed LGI-HSS algorithm can significantly decrease the number of solution evaluations. On the eight-objective DTLZ2 problem, the number of solution evaluations decreases with fluctuation as the number of selected solutions increases. On the eight-objective I-DTLZ2 problem, the average number of solution evaluations needed to select each solution is much smaller than that on the eight-objective DTLZ2 problem. This difference is the reason for the difference in the results in Table II for these two problems.

In the same manner as in Fig. 7, we show the results by GI-IGDSS and LGI-IGDSS in Fig. 8, and the results by GI-IGD+SS and LGI-IGD+SS in Fig. 9. We can observe from these two figures that the number of solution evaluations needed to select the first 50 solutions is much larger than that for the remaining 50 solutions. We can also see that the difference in the results on the two problems in these two figures is much smaller than that in Fig. 7. This is the reason why similar results were obtained for all problems by LGI-IGDSS and LGI-IGD+SS. 

\begin{figure}[htbp]
\centering
\includegraphics[width=0.33\textwidth,trim=10 10 10 20,clip]{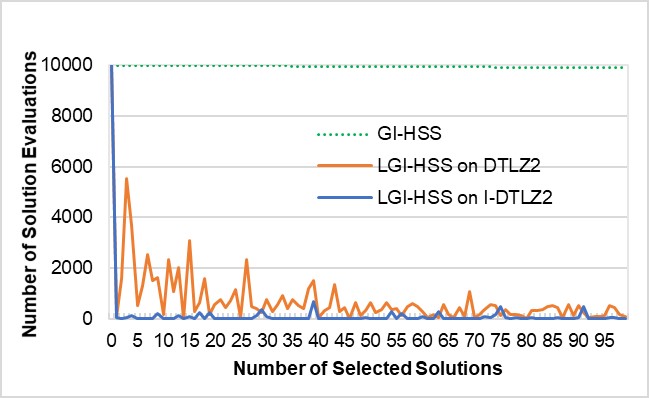}
\caption{The number of solution evaluations to select each solution used by GI-HSS and LGI-HSS.}
\end{figure}
\begin{figure}[htbp]
\centering
\includegraphics[width=0.33\textwidth,trim=10 10 10 20,clip]{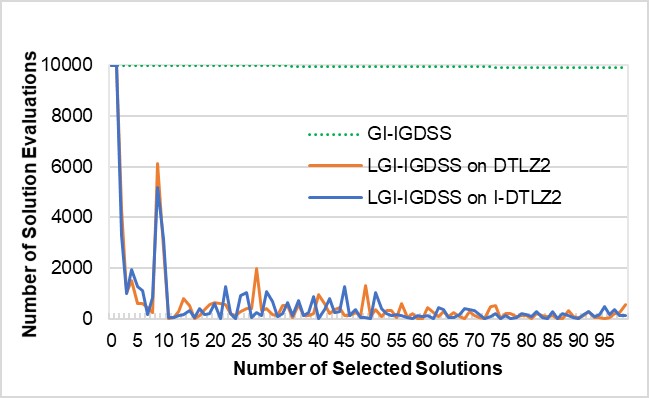}
\caption{The number of solution evaluations to select each solution used by GI-IGDSS and LGI-IGDSS.}
\end{figure}
\begin{figure}[htbp]
\centering
\includegraphics[width=0.33\textwidth,trim=10 10 10 20,clip]{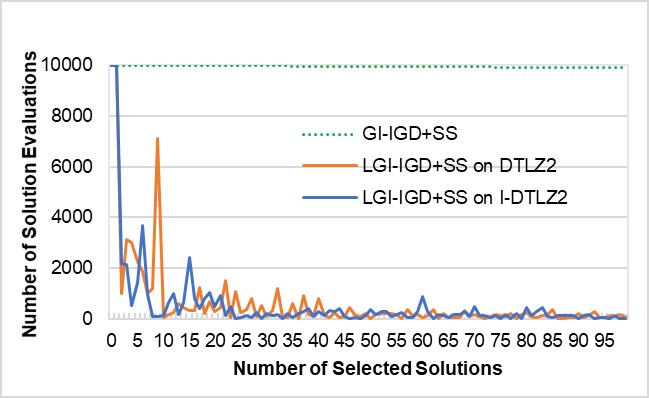}
\caption{The number of solution evaluations to select each solution used by GI-IGD+SS and LGI-IGD+SS.}
\end{figure}

In the worst case, the number of solution evaluations in the proposed algorithms is the same as the standard greedy algorithms (i.e., the worst time complexity of the proposed algorithms is the same as that of the standard greedy algorithms). However, as shown in Figs. 7-9, the number of actually evaluated solutions is much smaller than the worst-case upper bound (which is shown by the dotted green line in each figure). 

In order to examine the effect of the number of objectives on the efficiency of the proposed algorithm, we show experimental results by the proposed three algorithms on the five-objective and ten-objective I-DTLZ2 problems in Figs. 10-12. As in Figs. 7-9, a single run with the median computation time is selected for each algorithm on each test problem. In Fig. 10 by LGI-HSS, much fewer solutions are examined for the ten-objective problem than the five-objective problem. As a result, the average computation time on the ten-objective I-DTLZ2 problem by LGI-HSS in Table I was shorter than that on the five-objective I-DTLZ2 problem. In Fig. 11 by LGI-IGDSS, the difference in the number of solution evaluations between the two problems is not so large if compared with Fig. 10. As a result, the average computation time by LGI-IGDSS increased with the increase in the number of objectives in Table II. The results by LGI-IGD+SS in Fig. 12 are between Fig. 10 and Fig. 11.

\subsection{Size of Candidate Solution Sets}
In the previous subsection, 100 solutions were selected from 5000-20000 solutions of test problems with five to ten objectives. Our algorithms are proposed for choosing a pre-specified number of solutions from an unbounded external archive. In order to show the size of the subset selection problem in this scenario, we monitor the number of non-dominated solutions among all the examined solutions by an EMO algorithm on the four test problems with eight objectives. As an EMO algorithm, we use NSGA-II\cite{NSGAII} and NSGA-III\cite{NSGAIII}. The default parameter settings in \cite{NSGAII} are used in the algorithms (e.g., the population size is specified as 156). The number of non-dominated solutions is counted at the 200th, 400th, 600th, 800th and 1000th generations. Average results over 11 runs are shown in Fig. 13. From these figures, we can see that hundreds of thousands of non-dominated solutions can be obtained by EMO algorithms for many-objective problems. This observation supports the necessity of highly-efficient subset selection algorithms.

\begin{figure}[tbp]
\centering
\includegraphics[width=0.37\textwidth, height=0.2\textwidth,trim=10 10 10 20,clip]{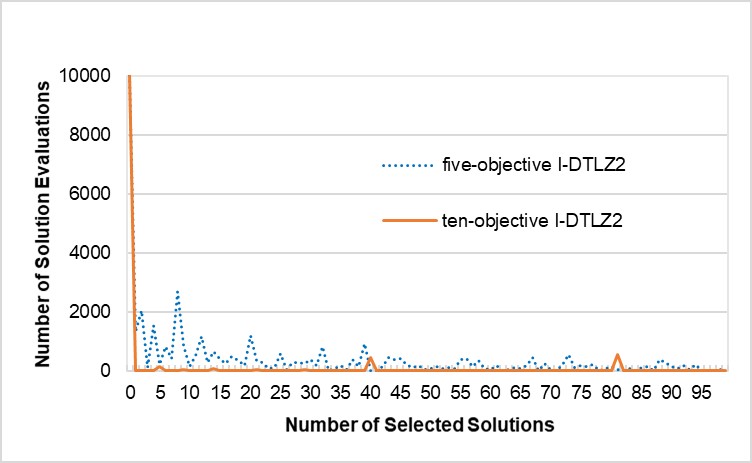}
\caption{The number of solution evaluations to select each solution used by LGI-HSS on five-objective and ten-objective I-DTLZ2 problems.}
\end{figure}
\begin{figure}[htbp]
\centering
\includegraphics[width=0.37\textwidth,trim=10 10 10 20,clip]{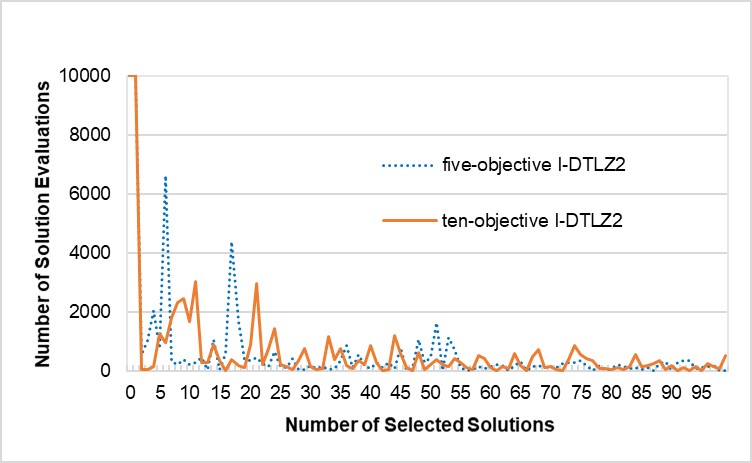}
\caption{The number of solution evaluations to select each solution used by LGI-IGDSS on five-objective and ten-objective I-DTLZ2 problems.}
\end{figure}
\begin{figure}[htbp]
\centering
\includegraphics[width=0.37\textwidth,trim=10 10 10 20,clip]{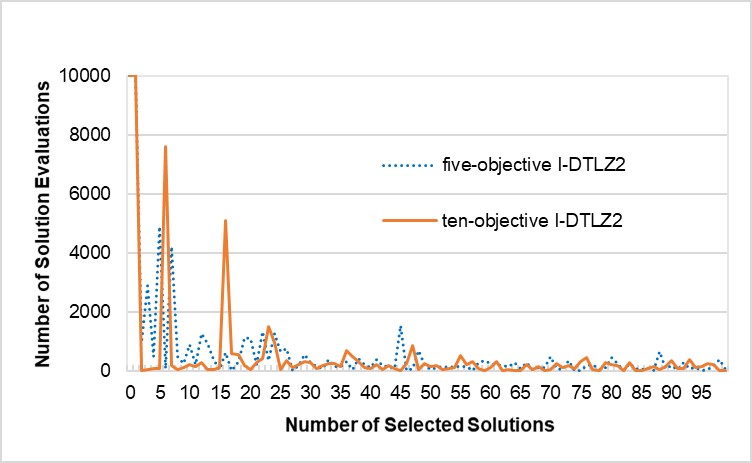}
\caption{The number of solution evaluations to select each solution used by LGI-IGD+SS on five-objective and ten-objective I-DTLZ2 problems.}
\end{figure}
\subsection{Performance on real-world solution sets}
We examine the performance of the three HSS algorithms on four real-world problems (i.e., car side impact design problem \cite{NSGAIII2}, conceptual marine design problem \cite{RE42}, water resource planning problem \cite{water} and car cab design problem \cite{NSGAIII}). We apply NSGA-II \cite{NSGAII} and NSGA-III \cite{NSGAIII} with an unbounded external archive to each problem. The population size is specified as 220 for the four-objective problems, 182 for the six-objective problem, and 210 for the nice-objective problem. After the 200\textit{th} generation of each algorithm, non-dominated solutions in the archive are used as candidate solution sets. In this manner, we generate eight candidate solution sets. Then, each hypervolume-based subset selection algorithm is applied to each candidate solution set to select 100 solutions. This experiment is iterated 11 time. The average size of the candidate solution sets and the average computation time of each subset selection algorithm are shown in Table III.

\begin{figure}[htbp]
\centering
  \subfigure[NSGA-II.]{\includegraphics[width=0.35\textwidth,trim=10 10 10 20,clip]{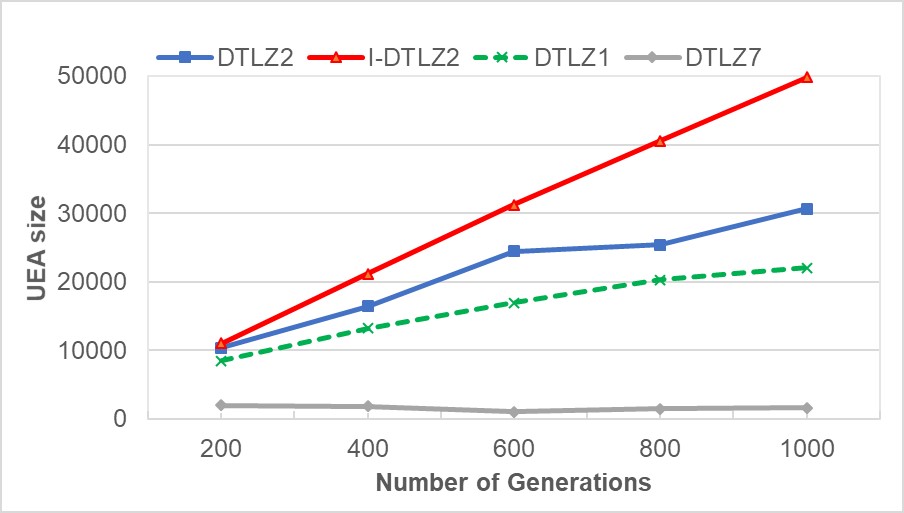}}
  \subfigure[NSGA-III.]{\includegraphics[width=0.35\textwidth,trim=5 10 10 20,clip]{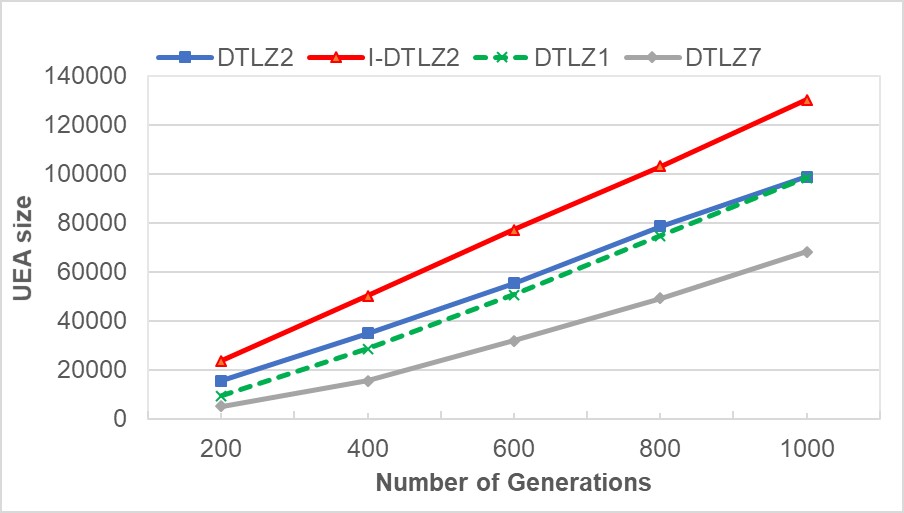}}
  \caption{ The average number of non-dominated
solutions among all the examined solutions at each generation. }
\end{figure}
We can see from Table III that the proposed algorithms always achieve the best results (i.e., the shortest average computation time) for all the eight settings. Especially when the number of objectives is large (e.g., the nine-objective car cab design problem), the proposed algorithm can significantly decrease the computation time for subset selection. This is consistent with the experimental results on artificially generated solution sets. Similar significant improvement by the proposed algorithms in the computation time is also observed for subset selection based on the IGD and IGD+ indicators whereas they are not included due to the paper length limitation.

\begin{table}[]
\centering
\caption{Average Computation Time (in Seconds) on Real-world Solution Sets Over 11 Runs. The Best Results are Highlighted by Bold.}
\label{tab:my-table}
\begin{tabular}{p{90pt}p{20pt}p{20pt}|p{13pt}p{13pt}p{13pt}}
\toprule
Solution Set                                     & Size  & \# Obj. & GI-HSS & UGI-HSS & LGI-HSS  \\ \hline
NSGA-II on the car side impact design problem    & 10913 & 4             & 84.0   & 10.2    & \textbf{7.0}      \\
NSGA-III on the car side impact design problem   & 11453 & 4             & 92.7   & 13.0    & \textbf{8.3}          \\
NSGA-II on the conceptual marine design problem  & 8549  & 4             & 62.4   & 6.1     & \textbf{4.8}     \\
NSGA-III on the conceptual marine design problem & 10328 & 4             & 81.6   & 8.0     & \textbf{6.5}   \\
NSGA-II on the water resource planning problem   & 12284 & 6             & 117.3  & 22.1    & \textbf{9.8}      \\
NSGA-III on the water resource planning problem  & 19985 & 6             & 238.1  & 38.3    & \textbf{23.2}   \\
NSGA-II on the car cab design problem            & 22196 & 9             & 412.5  & 81.5    & \textbf{15.3}    \\
NSGA-III on the car cab design problem           & 23958 & 9             & 516.3  & 97.6    & \textbf{21.4}   \\ \bottomrule
\end{tabular}
\end{table}

\section{Concluding Remarks}
In this paper, we proposed efficient greedy inclusion algorithms to select a small number of solutions from a large candidate solution set for hypervolume maximization, and IGD and IGD+ minimization. The proposed algorithms are based on the submodular property of the three indicators. The core idea of these algorithms is to use the submodular property to avoid unnecessary contribution calculations. The proposed lazy greedy algorithm finds the same solution subset as the standard greedy inclusion algorithm for each indicator since our algorithm does not change the basic framework of greedy inclusion. Our experimental results on the four test problems (DTLZ1, DTLZ2, DTLZ7 and Inverted DTLZ2) with five, eight and ten objectives showed that the proposed three greedy algorithms are much faster than the standard greedy inclusion algorithms. Besides, the proposed LGI-HSS is much faster than the state-of-the-art fast greedy inclusion algorithm. 

Our experimental results clearly showed that the proposed idea can drastically decrease the computation time (e.g., to 6-9\% of the computation time by the standard greedy inclusion algorithm for IGD+). The proposed idea is applicable to any performance indicator with the submodular property such as hypervolume, IGD and IGD+. However, when we use hypervolume approximation instead of exact calculation, the calculated hypervolume indicator is not strictly submodular. One interesting future research topic is to apply the proposed idea to approximate hypervolume subset selection algorithms in order to examine the quality of obtained subsets by lazy approximate algorithms. Another interesting research direction is to accelerate the proposed algorithms by further explore the special properties of these indicators. Besides, in distance-based greedy subset selection\cite{11Singh}, the distance of each candidate solution to the selected set does not increase by the addition of a new solution to the selected solution set. Thus, using similar lazy idea to reduce the number of distance calculations is also a promising future work. 

The source code of the proposed lazy greedy subset selection algorithms is available from https://github.com/weiyuchen1999/LGISS.


%





\ifCLASSOPTIONcaptionsoff
  \newpage
\fi





\bibliographystyle{IEEEtran}
\bibliography{tevc_ref}

\vfill


\end{document}